\newcommand{\bc}[1]{\bm{\mathcal{#1}}}
\newcommand{\E}{\mathbb{E}}
\newcommand{\I}{\mathbb{I}}
\newcommand{\R}{\mathbb{R}}
\newcommand{\cA}{\mathcal{A}}
\newcommand{\cC}{\mathcal{C}}
\newcommand{\cD}{\mathcal{D}}
\newcommand{\cE}{\mathcal{E}}
\newcommand{\cH}{\mathcal{H}}
\newcommand{\cL}{\mathcal{L}}
\newcommand{\cO}{\mathcal{O}}
\newcommand{\cP}{\mathcal{P}}
\newcommand{\cS}{\mathcal{S}}
\newcommand{\cU}{\mathcal{U}}
\newcommand{\cV}{\mathcal{V}}
\newcommand{\cW}{\mathcal{W}}
\newcommand{\btau}{\bm{\tau}}
\newcommand{\bX}{\bm{\mathbf{X}}}
\newcommand{\bx}{\bm{\mathbf{x}}}
\newcommand{\bZ}{\bm{\mathbf{Z}}}
\newcommand{\bz}{\bm{\mathbf{z}}}
\newcommand{\bW}{\bm{\mathbf{W}}}
\newcommand{\bcC}{\bc{C}}
\newcommand{\bcD}{\bc{D}}
\newcommand{\bcH}{\bc{H}}
\newcommand{\bcG}{\bc{G}}
\newcommand{\pr}[1]{\left(#1\right)}
\newcommand{\br}[1]{\left[#1\right]}
\newcommand{\cbr}[1]{\left\{#1\right\}}
\newcommand{\abs}[1]{\left|#1\right|}
\newcommand{\doop}{do}
\newcommand{\eps}{\varepsilon}
\newcommand{\Unif}{\mathcal{U}nif}
\newcommand{\prob}{\mathbb{P}}
\algnewcommand{\Continue}{\textbf{continue}}
\algnewcommand{\Break}{\textbf{break}}
\algnewcommand{\multiline}[1]{\parbox[t]{\linewidth}{#1}}
\declaretheorem[name=Theorem,style=plain,numberwithin=section]{theorem}
\declaretheorem[name=Lemma,style=plain,sibling=theorem]{lemma}
\declaretheorem[name=Proposition,style=plain,sibling=theorem]{proposition}
\declaretheorem[name=Corollary,style=plain,sibling=theorem]{corollary}
\declaretheorem[name=Definition,style=definition,sibling=theorem]{definition}
\declaretheorem[name=Assumption,style=definition,sibling=theorem]{assumption}
\declaretheorem[name=Remark,style=remark,sibling=theorem]{remark}
\tikzset{
    >=stealth',
    every edge/.append style={thick},
    node/.style={
        circle,
        rounded corners,
        draw=black,
        thick,
        minimum size=3em,
        text centered,
        scale=.7,
    },
}
\renewcommand{\cite}[1]{\citep{#1}}
\title{Causal Bandits without Graph Learning}
\author{%
  Mikhail Konobeev \\
  School of Computer and Communication Sciences, EPFL \\
  Borealis AI\\
  \texttt{mkon@hey.com} \\
  \And
  Jalal Etesami \\
  College of Management of Technology, EPFL \\
  Department of Computer Science, TUM \\
  \And
  Negar Kiyavash \\
  College of Management of Technology, EPFL
  % Coauthor \\
  % Affiliation \\
  % Address \\
  % \texttt{email} \\
  % \And
  % Coauthor \\
  % Affiliation \\
  % Address \\
  % \texttt{email} \\
  % \And
  % Coauthor \\
  % Affiliation \\
  % Address \\
  % \texttt{email} \\
}
\begin{document}

\maketitle

\begin{abstract}
We study the causal bandit problem when the causal graph is unknown and develop
an efficient algorithm for finding the parent node of the reward node using
atomic interventions. We derive the exact equation for the expected number of
interventions performed by the algorithm and show that under certain graphical
conditions it could perform either logarithmically fast or, under more general
assumptions, slower but still sublinearly in the number of variables.
We formally show that our algorithm is optimal as it meets the universal lower
bound we establish for any algorithm that performs atomic interventions.
Finally, we extend our algorithm to the case when the reward node has multiple
parents.  Using this algorithm together with a standard algorithm from bandit
literature leads to improved regret bounds.
\end{abstract}

\begin{acronym}
\acro{dag}[DAG]{Directed Acyclic Graph}
\acro{mab}[MAB]{Multi-armed bandit}
\acro{pcm}[PCM]{Probabilistic Causal Model}
\acro{raps}[\textsc{raps}]{RAndomized Parent Search algorithm}
\acro{ucb}[UCB]{Upper Confidence Bound}
\end{acronym}

\section{Introduction}

\ac{mab} settings provide a rich theoretical context for formalizing and
analyzing sequential experimental design procedures.  Each arm in a \ac{mab}
setting represents an experiment/action and the consequence of pulling an arm
is represented by a stochastic reward signal. The objective of a learner in a
\ac{mab} problem is to select a sequence of arms  over a time horizon in order
to either find an arm that results in the maximum reward or to maximize the
cumulative reward during this time horizon.  Bandit problems have a growing
list of applications in various domains such as marketing
\cite{huo2017risk,sawant2018contextual}, recommendation systems
\cite{heckel2019active,silva2022multi}, clinical trials
\cite{liu2020reinforcement}, etc. \ac{mab} algorithms are designed for the
setting when there is no structural relationships between different arms.
However, this assumption is often violated in practice because of
interdependencies among the rewards of various arms.  To capture such
interdependencies, different structural bandit settings have been proposed such
as linear bandits \cite{abbasi2011improved}, contextual bandits
\cite{agrawal2013thompson,lattimore2020bandit}, and causal bandits
\cite{lattimore2016causal,lee2018structural} with the latter being the main
focus of this paper.

In causal bandit setting, the dependencies between the rewards of different
actions are captured by a causal graph and  actions are modeled as
interventions on variables of the causal graph \cite{lattimore2016causal}.
%As mentioned earlier, in this setting besides the effects  of several
%variables on the reward node, cause-effect relationships among these variables
%themselves, modeled by a causal graph, are also considered.
Causal bandits can effectively model complex real-world problems. For instance,
%in healthcare applications, dosages of multiple drugs can be adjusted
%adaptively to identify a desirable clinical outcome \cite{}. In
marketing strategists can adaptively adjust their strategy which can be modeled
as interventions made in their advertisement network to maximize revenue
\cite{nair2021budgeted,zhang2022causal}.

A major drawback of most existing work in causal bandit literature is  the
limiting assumption that the underlying causal graph is given upfront
\cite{lattimore2016causal}, which is frequently violated in most real-world
applications.  Similar to \citet{lu2021causal}, we also study the causal bandit
problem when the underlying causal graph is unknown.  However, unlike
\citet{lu2021causal} our work does not assume the knowledge of the essential
graph of the causal graph. Our main contributions are summarized as follows.
% notion of central node in the underlying causal graph that appears in special
% structures such as directed trees, causal forests
% \cite{greenewald2019sample}, or intersection-incomparable chordal graphs
% \cite{squires2020active}, our approach dose not relay on any structural
% assumptions.
%Furthermore, the theoretical guarantees provided in \cite{} requires
%structural assumptions for the underlying causal graph such as being a
%directed tree or causal forest, our approach does not relay on any
\begin{itemize}[leftmargin=*]
\item We propose (\cref{sec:algorithm}) and analyze (\cref{sec:analysis}) a
\acfi{raps} which does not assume the knowledge of the causal graph (or the
essential graph of the causal graph). In our analysis we derive \emph{the exact
equation} for the expected number of interventions performed by \ac{raps} on
any graph.
\item We describe two graphical conditions under which \ac{raps}
works in a fast or slow, but still sublinear in the number of nodes, regime
(\cref{sec:analysis,sec:multiparent}).
\item Based on \ac{raps} we propose a method that improves upon standard
bandit algorithm using causal structure of the arms and derive upper bounds for
the regret of this method (\cref{sec:regret}).
% \item We prove a universal lower bound on any algorithm that attempts to
% discover the parent node of the reward node with atomic interventions and show
% that \ac{raps} matches this bound exactly (\cref{sec:lower-bound}).
\end{itemize}

\section{Related Work}
In recent years, several work on Causal Bandit problem
\cite{lattimore2016causal,sen2017identifying,lee2018structural,nair2021budgeted,de2022causal}
have shown that  incorporation of causal structure improves upon the
performance of standard bandit \ac{mab} algorithms. However, the aforementioned
work relay on a limiting assumption that the underlying causal graph is given.
In this work, we remove this assumption.
%There are several approaches have been developed to relax this assumption.

When the causal graph is unknown, a natural approach is to first learn it
through observations and interventions.  Problem of learning a causal graph
from a mix of observations and interventions has been extensively studied in
causal structure learning literature
\cite{hauser2014two,hu2014randomized,shanmugam2015learning}. Yet learning the
entire underlying causal graph might not be necessary for a learner in order to
maximize its reward.  Further, merely learning the essential graph requires
more than linear (in terms of variables/nodes in the graph) number of
conditional independence tests~\cite{mokhtarian2022learning}.  Instead, we
propose an algorithm that discovers the parents of the reward node in sublinear
number of interventions on large classes of graphs. In case when it is known
that there is at most one parent of the reward node, all of these interventions
are atomic and we show that our algorithm is optimal.

\citet{de2022causal} propose a causal bandit algorithm which does not require
any prior knowledge of the causal structure and uses separating sets estimated
in an online fashion. Their theoretical result holds only when a true
separating set is known and the authors do not provide a final bound on the
regret.  The closest work to our paper  is that of \citet{lu2021causal} in
which the authors derive regret bounds for an algorithm based on central node
interventions. However, they assume the essential graph is known to the learner
while our algorithm makes no such assumption.

\section{Preliminaries}\label{sec:preliminaries}
A \acf{pcm} \cite{pearl2009causality} is a \ac{dag} $\bcG=(\cV,\cE)$
over a set of random variables $\cV$ with edges $\cE$ and a
distribution $\prob$ over the variables in $\cV$ that factorizes
with respect to $\bcG$ in the sense that the distribution over $\cV$
could be written as a product of conditional distributions of each
variable given its parents. We denote the number of vertices in
$\cV$ by $n$ and assume that each variable $X\in\cV$ takes value
from a finite set $[K]:=\cbr{1,\dots,K}$. The set of ancestors and
descendants of a node $X$ in $\bcG$ are denoted by $\cA_{\bcG}(X)$
and $\cD_{\bcG}(X)$, respectively. In both cases, we might omit
writing $\bcG$ when it is clear from the context.  In our definition
a node is its own ancestor and descendant and we will use horizontal
bar to exclude it, for example, for ancestors we will write
$\bar{\cA}(X)$ for $\cA(X)\setminus\cbr{X}$.  For a given subset
$\cS\subseteq\cV$, we define $\cA_{\bcG}(\cS):=\cup_{X\in\cS}\cA_{\bcG}(X)$
and $\cD_{\bcG}(\cS):=\cup_{X\in\cS}\cD_{\bcG}(X)$.  The vertex-induced
subgraph over nodes in $\cS$ is denoted by $\bcG_\cS$.  To simplify
the notation, we use $\cA_\cS(X)$ (similarly, $\cD_\cS(X)$) for the
set of ancestors (respectively, descendants) of $X$ in the induced
subgraph $\bcG_\cS$.  In addition, we will use superscript $c$ to
denote the non-ancestors/non-descendants, for example,
$\cA_\cS^c(X)=\cS\setminus\cA_\cS(X)$.  A collider on a path
$X_1,\dots,X_{\ell}$ between two nodes $X_1,X_\ell\in\cV$ is a node
$X_j$ with $1<j<\ell$ such that $X_j$ is a children of both $X_{j-1}$
and $X_{j+1}$, i.e., $X_{j-1}\to X_{j}\leftarrow X_{j+1}$.  For two
sets $A,B$, we denote their symmetric difference by $A\triangle
B:=(A\cup B)\setminus(A\cap B)$ and assume that all binary set
operations have the same precedence.

\subsection{Problem Setting}
In a causal bandit \cite{lattimore2016causal}, a learner $\cL$ performs a set
of interventions, i.e. actions, at each round $t\in[T]$ by setting a subset of
variables $\bX_t=(X_1,\dots,X_\ell)\subseteq\cV$ to some values
$\bx_t\in[K]^{\ell}$, denoted by $\doop(\bX_t=\bx_t)$. Playing the empty arm
denoted by $\doop()$ corresponds to observing a sample from the distribution
$\mathbb{P}$ underlying the \ac{pcm}.  The goal of the learner is to maximize a
designated reward variable $R$. When there is only one parent node of the
reward node in the graph $\bcG$ the causal bandit corresponds to standard
stochastic $K$-armed bandit. In what follows, we assume that the reward node
lies outside of the set of variables $\cV$, and thus we implicitly work with a
subgraph over the nodes $\cV\setminus\cbr{R}$.  We denote the parent set of the
reward variable by $\cP\subseteq \cV$.  In \cref{sec:algorithm,sec:analysis} we
start by assuming that $P$ is the only parent of $R$ and then later we
generalize our results to multiple parent nodes in \cref{sec:multiparent}. We
also allow for the reward node to have no parents in $\cV$ which we denote by
writing $P=\varnothing$.  The case when $P=\varnothing$ corresponds to having
an empty set of variables and thus we have
$\cA(\varnothing)=\cD(\varnothing)=\emptyset$.  The learner does not know the
underlying DAG over the variables in $\cV$ and cannot intervene directly on the
reward variable $R$.

Performance of a learner $\cL$ can be measured in terms of \emph{cumulative
regret} which
takes into account the rewards received from all the interactions performed,
\vspace{- .25cm}
\begin{align*}
R_\cL^T(\bcG,P)&=T\max_{\bX\subseteq\cV}
\max_{\bx\in[K]^{\abs{\bX}}}\E[R|\doop(\bX=\bx)]
-\sum_{t=1}^T\E[R|\doop(\bX_t=\bx_t)],
\end{align*}%\vspace{- .25cm}
or \emph{simple regret} which only focuses on the reward of the final intervention, predicted
to be the best by the learner after $T$ interactions, %\vspace{- .2cm}
\begin{align*}
r_\cL^T(\bcG,P)&=\max_{\bX\subseteq\cV}
\max_{\bX\in[K]^{\abs{\bX}}}\E[R|\doop(\bX=\bx)]
-\E[R|\doop(\bX_{T+1}=\bx_{T+1})],
\end{align*}
where, $\doop(\bX_{T+1}=\bx_{T+1})$ is the intervention
estimated to be the best by the learner $\cL$ after performing $T$
interactions and $|\bX|$ denotes the number of variables in $\bX$.%\vspace{- .5cm}

\paragraph{Remark.}
Note that in both definitions of regret, the learner is compared against an
oracle that always selects the best intervention. When the underlying DAG
$\bcG$ does not contain any unobserved variables, it is known that the best
intervention is always over the set of parent nodes of the reward node $R$
\cite{lee2018structural}.  Thus, in this work, we focus on a learner $\cL$ that
performs  interventions to detect the set of parent nodes of the reward node
and then finds the best assignment to $\cP$ in order to minimize regret.  Our
results in \cref{sec:analysis,sec:multiparent} can be used to bound both simple
and cumulative regret. For conciseness, we present only a cumulative regret
bound in the main text in \cref{sec:regret} and extend it to a simple regret
bound in \cref{sec:simple-regret}.

\section{Regret Analysis}\label{sec:regret}

In this section we present regret bounds achieved by a combination of our
algorithm aimed at discovering parent nodes and presented later in
\cref{sec:algorithm,sec:multiparent}, and a standard multi-armed bandit
algorithm such as \acs{ucb}~\cite{cappe2013kullback}. First, for simplicity we
assume that the reward variable is $[0,1]$-bounded although it is possible to
extend our results to more general $\sigma$-subgaussan variables.  Next, we
introduce the following assumptions which are similar to the assumptions in
\cite{lu2021causal}.
\begin{assumption}[Ancestoral Effect Identifiability]
\label{assumption:ancestoral-effect-identifiability} Let $\bZ\subseteq\cP$ be a
sequence of length $0\leq\ell\leq\abs\cP$ of last elements of $\cP$ in some
topological order. Further, let $X,Y\in\cV\setminus\cD(\bZ)$  be any two
variables such that $X\in\cA(Y)$ in $\bcG$. Assume $\abs{
\prob\cbr{Y=y|\doop(\bZ=\bz)}-\prob\cbr{Y=y|\doop(X=x,\bZ=\bz)}}>\eps$ for some
$x,y\in[K]$ and $\bz\in[K]^{\abs\bZ}$ where $\eps>0$ is a universal constant.
\end{assumption}
\begin{assumption}[Reward Identifiability]\label{assumption:reward-gap}
Let $X$ be an arbitrary ancestor of a node in $\cP$ in graph under intervention
over $\bZ$, $\bcG_{\overline{\bZ}}$, where $\bZ\subseteq\cP$ is a sequence of
length $0\leq\ell\leq\abs\cP$ of last elements of $\cP$ in some topological
order.  We assume that there exists $x\in[K]$ such that
$\abs{\E[R|\doop(\bZ=\bz)]-\E[R|\doop(X=x,\bZ=\bz)]}>\Delta$ for some
$\Delta>0$ and $\bz\in[K]^{\abs\bZ}$.
\end{assumption}
The first assumption allows our algorithm to obtain information about the
overall graph structure by only intervening on one node along with a subset of
the reward parents. The second assumption is necessary to determine whether the
reward node is a descendant of any node.  We hypothesize that
\cref{assumption:ancestoral-effect-identifiability} could be eased to only hold
for nodes $X,Y$ such that the shortest directed path between $X$ and $Y$ is at
most a certain length. In this case intervening on a node would provide
information about local structure of the graph. For the case when there is one
parent of the reward node and this information is available to the learner,
\citet{lu2021causal} show that the second assumption is necessary in that
without it any learner suffers $\Omega(\sqrt{nKT})$ regret in the worst case.

In order to bound the regret, we need to analyze the number of distinct nodes
intervened on by a learner $\cL$ in a graph $\bcG$ to find the set of parent
nodes $\cP$. We denote this quantity by $N_\cL(\bcG,\cP)$ and unless stated
otherwise, we assume that the learner uses the proposed \ac{raps} algorithm
presented first in \cref{alg:thealg} in \cref{sec:algorithm} for the single
parent case and later extend to multiple parents in \cref{sec:multiparent}. Our
regret bound is given for conditional regret defined as follows:
\begin{align*}
R_{\cL}^T(\bcG,P\mid E)
=T\max_{\bX\subseteq\cV}\max_{\bx\subseteq[K]^{\abs{\bX}}}\E[R|\doop(\bX=\bx)]
-\sum_{t=1}^T\E[R|\doop(\bX_t=\bx_t),E],
\end{align*}
where $E$ is the event that our algorithm correctly finds the set of parent
nodes, formally defined in \cref{lemma:batch-size}. Additionally, we denote by
$\Delta_{\bX=\bx}$ the mean reward gap of playing arm $\doop(\bX=\bx)$ for any
intervention set $\bX$ and realization $\bx$ from playing the best arm. Our
main result is the following \lcnamecref{thm:regret} proved in
\cref{appendix:regret}.
\begin{restatable}{theorem}{regret}\label{thm:regret}
Assume that $\cP\neq\emptyset$, i.e., the reward variable has at least one
parent in $\cV$. For the learner that uses \cref{alg:multiparent} and then runs
a \acs{ucb} the following bound\footnote{$f(n)\preceq g(n)$ stands for an
inequality up to a universal constant.} for the conditional regret holds with
probability at least $1-\delta$:
\begin{align}
\hspace{-.25cm}R_{\cL}^T(\bcG,P\mid E)
\!\leq \!\max\cbr{\frac1{\Delta^2},\frac1{\eps^2}}\!
K^{\abs\cP+1}\E[N(\bcG,\cP)]\log\pr{\!\frac{nK^n}{\delta}\!}
\!+\hspace{-.35cm}\sum_{\bx\in[K]^{\abs{\cP}}}\hspace{-.2cm}\Delta_{\cP=\bx}
\pr{\!1+\!\frac{\log{T}}{\Delta_{\cP=\bx}^2}}.\label{eq:regret-bound}
\end{align}
% In particular, if $(\bcG,P)$ satisfy the conditions of
% \cref{thm:log-condition}, then the conditional regret is asymptotically
% bounded by
% \begin{align*}
% &\cO\Biggl(K\max\cbr{\frac{1}{\Delta^2},
% \frac{1}{\eps^2}}\log\pr{\frac{nK}{\delta}}\log(n)\\
% &\qquad+\sqrt{KT\log(T)}\Biggr)
% \end{align*}
% with probability at least $1-\delta$ and if $(\bcG,P)$ satisfy the
% condition of \cref{thm:sublinear}, then it is bounded by
% \begin{align*}
% &\cO\Biggl(K\max\cbr{\frac{1}{\Delta^2},
% \frac{1}{\eps^2}}\log\pr{\frac{nK}{\delta}}\frac{n}{\log_d(n)}\\
% &\qquad+\sqrt{KT\log(T)}\Biggr)
% \end{align*}
% with probability at least $1-\delta$ where $d$ is equal to the maximum degree
% in the skeleton of $\bcG$.
\end{restatable}
Our regret bound has two terms: the first comes from finding the set of parent
nodes and the second from determining the best intervention over the parents.
The bound above improves on the performance of standard multi-armed bandit
algorithm because the terms in \cref{eq:regret-bound} depend on the number of
parents of the reward node and there are $K^{\abs{\cP}}$ such parents, while
with standard bandit algorithm there will be $K^n$ terms in the summation
similar to the second term in \cref{eq:regret-bound}. The main limitation of
our work is the $\frac1{\min\cbr{\eps,\Delta}^2}$-dependence and we believe
that by playing each arm proportionally to its' inverse reward gap while
simultaneously trying to estimate the set of parent nodes is a good direction
for future work that would improve this dependence. In
\cref{appendix:experiments} we provide experimental results showing the values
of $\eps$ and $\Delta$ for different Erd\H{o}s-R\'enyi graphs. In what follows
we present and provide an analysis of our algorithm to discover parent nodes.

\section{Randomized Parent Search Algorithm}\label{sec:algorithm}
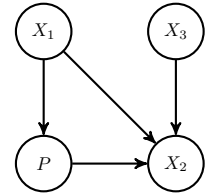
\begin{wrapfigure}{r}{0.3\linewidth}
\centering
\begin{tikzpicture}[->,every node/.style=node]
\node (x1) {$X_1$};
\node[below=of x1] (p) {$P$};
\node[right=of p] (x2) {$X_2$};
\node[right=of x1] (x3) {$X_3$};
\path
(x1) edge (p)
(x1) edge (x2)
(p) edge (x2)
(x3) edge (x2);
\end{tikzpicture}
\caption{An example of \ac{dag} with a single parent node $P$.}
\label{fig:example}
\end{wrapfigure}
In this section we present our learner, i.e., \acf{raps}, for the case when the
reward node has at most one parent. The algorithm is shown in
\cref{alg:thealg}.  We denote the parent node by $P$ (which is set to be
$\varnothing$ when there is no parent) and denote the number of distinct nodes
intervened on by our algorithm by $N(\bcG,P)$. This algorithm could be run
multiple times to discover each parent node as will be later discussed in
\cref{sec:multiparent}.  After the parent node is discovered, one could use
standard algorithms from bandit literature \citep[see, for
example,][]{lattimore2020bandit} to find the best intervention over it to
minimize the simple or cumulative regret.  \Cref{alg:thealg} defines a
recursive function \textsc{REC} with single argument denoted by $\cC$ --- the
so called \emph{candidate set} of nodes in $\bcG$ which might contain $P$ ---
and this function is called initially with all the nodes in the graph as its
argument.

\begin{algorithm}[tb]
\caption{\acf{raps} for single parent node}
\begin{algorithmic}[1]
\Require{Set of nodes $\cV$ of $\bcG$ given as input}
\Ensure{The parent node $P\in\cV$ of the reward node or $\varnothing$
if there is no parent node in $\cV$}
\State\ac{raps} works by calling
\Call{rec}{$\cC=\cV$} defined as follows
\Function{rec}{$\cC$}\label{line:func}
\If{$\cC=\emptyset$}
\State\Return $\varnothing$
\EndIf
\State $X\sim\Unif(\cC)$\label{line:sample}
\State Intervene on $X$ to determine if $P\in\cD_\cC(X)$
\label{line:intervention}
\If{$P\in\cD_\cC(X)$}\label{line:condition}
\State $\hat{P}\gets\text{\Call{rec}{$\cD_\cC(X)\setminus\cbr{X}$}}$
\label{line:ancestor-call}
\If{$\hat{P}=\varnothing$}
\State\Return$X$
\EndIf
\State\Return$\hat{P}$
\EndIf
\State\Return\Call{rec}{$\cC\setminus\cD_\cC(X)$}\label{line:not-ancestor-call}
\EndFunction
\end{algorithmic}
\label{alg:thealg}
\end{algorithm}

We will explain \ac{raps} first with an example. Consider the graph with four
nodes in \cref{fig:example} where $P$ is the parent of reward node $R$ (not
shown on the figure). The algorithm starts by calling the recursive function
with $\cC=\cV=\cbr{X_1,X_2,X_3,P}$. Assume that during this call the recursive
function samples $X_3$. Changing the value of this node should allow the
learner to determine the descendants which are in this case $\cbr{X_2,X_3}$ and
do not include $P$. The learner realizes this because $R$ does not change
unless P changes. Thus, there will be another call of the recursive function
with $\cC=\cV\setminus\cbr{X_2,X_3}=\cbr{X_1,P}$ on
\cref{line:not-ancestor-call}.  After that, if in the recursion the node $X_1$
is sampled, the same function is called on \cref{line:ancestor-call} with
$\cC=\cbr{P}$. This is because $P$ is the only descendant of $X_1$ not
including $X_1$ in the graph over the nodes in $\cbr{X_1,P}$. Lastly, the
algorithm will have to sample $P$ and return it as the discovered parent node.

\subsection{Determining Descendants}\label{sec:determine-ancestor}
In general, \ac{raps} intervenes on a randomly selected node $X\in\cC$ on
\cref{line:intervention}. Several interventions on $X$ should be sufficient to
determine the descendants of $X$ and whether $P\in\cD_{\cC}(X)$.  This is
because changing the value of $X$ should change the values of the descendants
of $X$ and we can determine if $P\in\cD_{\cC}(X)$ by checking if the value of
the reward variable $R$ changes. Notice that if $Y\not\in\cC$, then none of the
descendants of $Y$ are in $\cC$ which means that it is possible to find
$\cD_{\cC}(X)$ simply by taking $\cD_{\bcG}(X)\cap\cC$.

Let $\bar{R}$ and $\bar{R}^{\doop(X=x)}$ denote sample mean of the reward
variable under observational and interventional distributions.  Our
\cref{assumption:reward-gap} allows us to determine whether an arbitrary node
$X$ is an ancestor of the reward node.  This is done by comparing
$\abs{\bar{R}-\bar{R}^{\doop(X=x)}}$ for all $x\in[K]$ with $\Delta/2$ and
concluding that $X$ is an ancestor of $P$ in $\bcG$ (and therefore in
$\bcG_{\cC}$ where $\cC$ is an argument passed to the recursive function in
\cref{alg:thealg}) if for some $x\in[K]$ the absolute difference exceeds the
threshold. \Cref{assumption:ancestoral-effect-identifiability} allows to
determine the descendants of $X$ after an intervention on it.  For this we
consider as the descendants the set of nodes $Y\in\cC$ such that for some
$x,y\in[K]$ the absolute difference $\abs{\hat{P}(Y=y)-\hat{P}(Y=y|\doop(X=x)}$
exceeds $\eps/2$, where $\hat{P}(\cdot),\hat{P}(\cdot|\doop(X=x))$ are the
empirical distributions over $Y$ without any intervention and under
intervention $\doop(X=x)$. In \cref{lemma:batch-size} we provide the exact
number of times the algorithm needs to intervene on each node, i.e.  the sample
sizes to compute $\bar{R},\bar{R}^{\doop(X=x)},\hat{P}(\cdot),
\hat{P}(\cdot|\doop(X=x))$, in order to find the parent node with high
probability.

Our analysis in later sections only concern the number of distinct nodes our
algorithm needs to intervene on to find the parent node $P$.  In what follows,
we first present \emph{the exact expression} to compute the expected number of
distinct node interventions performed by \ac{raps}.  Next, we introduce classes
of \acp{dag} for which this expected value is either asymptotically logarithmic
or sublinear in the number of nodes $n$.

\section{Analysis of \ac{raps} for Single Parent Case}\label{sec:analysis}

We start by stating the exact expression for the expected number of distinct
node interventions performed by \cref{alg:thealg}. The proof is in
\cref{appendix:exact}.

\subsection{Expected Number of Interventions}\label{sec:exact}
\begin{restatable}{theorem}{exact}\label{thm:exact}
The expected number of distinct node interventions performed by a learner that
uses \cref{alg:thealg} to determine the parent node $P$ is given by
\begin{equation}
\E[N(\bcG,P)]=
\sum_{X\in\cV}\frac{1}{\abs{\cA(P)\triangle\cA(X)\cup\cbr{X}}}.
\label{eq:exact}
\end{equation}
\end{restatable}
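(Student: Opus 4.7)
The plan is to analyse \ac{raps} via an equivalent priority model: assign each node $X \in \cV$ an independent uniform priority $\pi_X \in [0,1]$, so that the uniform draw $X \sim \Unif(\cC)$ at each call becomes the selection of the minimum-priority element of the current $\cC$, and the entire execution of \ac{raps} becomes a deterministic function of the priority vector. By linearity of expectation $\E[N(\bcG,P)] = \sum_{X \in \cV} \prob(X \text{ is sampled})$, so it suffices to prove that $\prob(X \text{ is sampled}) = 1/(|\cA(P) \triangle \cA(X) \setminus \{X\}| + 1)$.

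A key preliminary is the structural invariant that at every recursive call with candidate set $\cC$, and for every $Y \in \cC$, one has $\cA_{\bcG_\cC}(Y) = \cA(Y) \cap \cC$ and $\cD_{\bcG_\cC}(Y) = \cD(Y) \cap \cC$. I would establish this by induction on the recursion depth, using the two update rules for $\cC$ together with acyclicity of $\bcG$ to rule out forbidden intermediate vertices on the paths witnessing ancestry. The target characterisation is then: $X$ is sampled if and only if $\pi_X$ is the minimum priority in the set $S_X := \{X\} \cup (\cA(P) \triangle \cA(X) \setminus \{X\})$; by standard order statistics of iid uniform variables this yields $\prob(X \text{ sampled}) = 1/|S_X|$ and hence the formula. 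The sufficient direction combines the invariant with a case split: sampling some $Y \in \cC$ eliminates $X$ from the new candidate set exactly when $Y \in \cA(P) \triangle \cA(X)$, since if $Y \in \cA_\cC(P)$ the algorithm retains the descendants of $Y$ (so $X$ survives iff $Y \in \cA(X)$), and otherwise the descendants of $Y$ are discarded (so $X$ survives iff $Y \notin \cA(X)$); in both cases $X$'s survival is equivalent to $Y$ and $X$ agreeing on membership in $\cA(P)$. Hence if $\pi_X$ is minimal in $S_X$ then no lower-priority node can eliminate $X$, and $X$ is sampled.

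The harder direction---if some $Y \in \cA(P) \triangle \cA(X) \setminus \{X\}$ has $\pi_Y < \pi_X$ then $X$ is not sampled---is where I expect the main obstacle. The subtlety is that the minimum-priority offending $Y$ need not itself be sampled: it may be eliminated earlier by some $Z \notin \cA(P) \triangle \cA(X)$, and one must argue that the ensuing chain of eliminations still ultimately knocks $X$ out. I would handle this by induction on the priority rank of $X$ (equivalently on the number of nodes with priority below $\pi_X$), showing that throughout the algorithm's execution the property ``some element of $\cA(P) \triangle \cA(X) \setminus \{X\}$ lies in the current $\cC$ with priority below $\pi_X$'' is preserved until $X$ itself is removed from $\cC$. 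Combining both directions and summing over $X \in \cV$ then gives the claimed formula.
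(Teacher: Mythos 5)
Your overall route is the same as the paper's: the priority model is just a re-parametrisation of the uniform random permutation used in the paper's auxiliary Algorithm~3, and your target characterisation ``$X$ is sampled iff $\pi_X=\min_{S_X}\pi$'' is exactly the rule by which that auxiliary algorithm decides whether to intervene on $X$. Your per-step elimination rule (sampling $Y$ removes $X$ from the candidate set iff $Y\in\cA(P)\triangle\cA(X)$) is correct, and with it your ``sufficient'' direction does go through: if $\pi_X$ is minimal in $S_X$, then no node capable of eliminating $X$ can ever be the priority minimum of a candidate set still containing $X$, so $X$ survives until it is sampled. This already yields $\prob(X\text{ sampled})\geq1/\abs{S_X}$.

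The genuine gap is the direction you yourself flag as harder, and the difficulty is not that your invariant is hard to prove --- it is false, and so is the characterisation it is meant to establish. Take the graph of Figure~1 ($X_1\to P$, $X_1\to X_2$, $P\to X_2$, $X_3\to X_2$) and the node $X_2$: here $\cA(P)\triangle\cA(X_2)\setminus\cbr{X_2}=\cbr{X_3}$, so $S_{X_2}=\cbr{X_2,X_3}$ and the claimed formula gives $\prob(X_2\text{ sampled})=1/2$. Assign $X_1$ the lowest priority, then $X_3$, then $X_2$, then $P$. The first sampled node is $X_1$, a \emph{common} ancestor of $P$ and $X_2$, so it does not eliminate $X_2$; but the new candidate set $\cD(X_1)\setminus\cbr{X_1}=\cbr{P,X_2}$ no longer contains $X_3$. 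Your invariant (``some element of $\cA(P)\triangle\cA(X_2)\setminus\cbr{X_2}$ with priority below $\pi_{X_2}$ remains in $\cC$'') is destroyed at this step, and $X_2$ is sampled next even though $\pi_{X_3}<\pi_{X_2}$. Enumerating the four choices of first sample for this graph gives $\prob(X_2\text{ sampled})=1/4+1/4+1/4=3/4\neq1/2$: a competitor of $X$ in $S_X$ can be knocked out of the race by a node \emph{outside} $\cA(P)\triangle\cA(X)$, which skews the race in $X$'s favour, so only the one-sided inequality survives. You have located exactly the right crux --- this is the same step on which the paper's own argument leans, where its counting identity tacitly identifies the set of nodes still eligible under the permutation rule with the candidate set of Algorithm~1, and the same example separates the two --- but neither your proposed invariant nor any argument whose endpoint is this characterisation can close it.
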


Next, we present two conditions under which \ac{raps} performs sublinearly.  In
the ``fast'' regime, it requires $\cO(\log(n))$ expected number of
interventions, while in the ``slow'' regime, it requires
$\cO\pr{\frac{n}{\log_d(n)}}$ expected number of interventions with $d$ being
the maximum degree in the skeleton of $\bcG$.  It is noteworthy that our
algorithm even in the slow regime outperforms the na\"ive exploration method
that requires $\Omega(n)$ interventions.  Moreover, in
\cref{appendix:lower-bound}, we introduce a universal lower bound on the
expected number of interventions required by any learner to find the parent
node and show that \cref{eq:exact} matches this lower bound.

\subsection{Fast Regime}
In order to introduce the condition under which \ac{raps} performs fast, we
first characterize the candidate sets $\cC\subseteq \cV$, that is the sets of
nodes that the recursive function \cref{alg:thealg} could be called with as an
argument. To this end, we define the following family of subsets of $\cV$.
%We start by characterizing the candidate sets with which the recursive
%function on \cref{line:func} of \cref{alg:thealg} is called. Our condition for
%the fast performance of our algorithm is stated as a condition for all
%subgraphs $\bcG_{\cC}$ of large enough size with $\cC$ being an arbitrary
%candidate set with which the recursive function on \cref{line:func} of
%\cref{alg:thealg} is called.  We start by defining a candidate family of sets.
\begin{definition}
A \emph{candidate family} of a graph $\bcG$ with a parent node $P$ is a family
of subsets given by
\begin{align*}
&\bcC_{\bcG}(P):=\cbr{\cD^c(\cW)\big|\;\cW\subseteq\cA^c(P)}
\cup\cbr{\cD_{\cD(X)}^c(\cW)\setminus\cbr{X}\big|X\in\cA(P),
\cW\subseteq\cA_{\cD(X)}^c(P)}.
\end{align*}
\end{definition}
Let $\cW$ be an arbitrary set of non-ancestors of $P$. All descendants of these
non-ancestors could be removed from the starting candidate set $\cC=\cV$. This
corresponds to the first family on the right hand side in the definition of
$\bcC_{\bcG}(P)$. At the same time, the algorithm might also reduce the set of
candidate nodes if it discovers an ancestor of the parent node $P$. This
happens when the recursive function is called on \cref{line:ancestor-call}. Let
$X$ be an intervened on ancestor of $P$, then the candidate set reduces to the
subset of descendants of $X$. This set might again exclude arbitrary
non-ancestors of $P$ previously denoted by $\cW$, but this time in the subgraph
over $\cD(X)$. We provide an example of the candidate family for the line graph
in \cref{fig:line-graph} later in this section. Next
\lcnamecref{lemma:candidate-family} shows that when the recursive function in
\cref{alg:thealg} is called, its argument belongs to the candidate family
$\bcC_{\bcG}(P)$.  The proof is in \cref{appendix:log-bound}.

\begin{restatable}{lemma}{cfamily}\label{lemma:candidate-family}
All possible arguments $\cC$ with which the recursive function in
\cref{alg:thealg} is called are contained within the candidate
family $\bcC_{\bcG}(P)$.
\end{restatable}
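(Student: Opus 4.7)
The plan is induction on the depth of the recursive call in \cref{alg:thealg}. The base case is immediate: the initial call uses $\cC = \cV = \cD^c(\emptyset)$, which is of the first form in $\bcC_\bcG(P)$ with $\cW = \emptyset$.

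For the inductive step, suppose the recursion is called with some $\cC \in \bcC_\bcG(P)$ and samples $X\sim\Unif(\cC)$ on line~\ref{line:sample}. I would first record an auxiliary structural observation: for every $\cC \in \bcC_\bcG(P)$ and every $X\in\cC$, descendants in the induced subgraph agree with global descendants restricted to $\cC$, i.e.\ $\cD_\cC(X) = \cD_\bcG(X) \cap \cC$. This holds because both forms in $\bcC_\bcG(P)$ are obtained by subtracting sets of the shape $\cD(\cdot)$, which are closed under descent, from a fixed ambient set; so any directed $\bcG$-path from $X$ to $Z\in\cC$ that tried to exit $\cC$ would have its tail remain in the removed descendant-closed set, contradicting $Z\in\cC$. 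With this in hand the two possible next arguments in the algorithm simplify to $\cC \setminus (\cD_\bcG(X) \cap \cC)$ on line~\ref{line:not-ancestor-call} and $(\cD_\bcG(X) \cap \cC) \setminus \{X\}$ on line~\ref{line:ancestor-call}.

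I then split into four cases. If $\cC = \cD^c(\cW)$ is of the first form: the non-ancestor branch (when $P\notin\cD_\cC(X)$, which forces $X\in\cA^c(P)$) yields $\cV \setminus \cD(\cW \cup \{X\})$, still of the first form with $\cW \cup \{X\}\subseteq\cA^c(P)$; the ancestor branch (when $P\in\cD_\cC(X)$, so $X\in\cA(P)$) yields $(\cD(X) \setminus \cD(\cW)) \setminus \{X\}$, which I place in the second form with $X_0 = X$ by setting $\cW' := \cD(X) \cap \cD(\cW)$. Three short facts suffice: (a) $\cW'$ is downward closed, hence $\cD(\cW') = \cW'$; (b) because $\cW' \subseteq \cD(X)$, descendants inside $\bcG_{\cD(X)}$ coincide with descendants in $\bcG$, giving $\cD_{\cD(X)}(\cW') = \cW'$; and (c) $\cW' \subseteq \cA_{\cD(X)}^c(P)$, because in a DAG a descendant of a non-ancestor of $P$ cannot itself be an ancestor of $P$. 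Combining these, $\cD(X) \setminus \cD_{\cD(X)}(\cW') = \cD(X) \setminus \cW' = \cD(X) \setminus \cD(\cW)$, as required. The two cases with $\cC = \cD_{\cD(X_0)}^c(\cW)\setminus\cbr{X_0}$ are handled in the same spirit: the non-ancestor branch augments $\cW$ to $\cW \cup \{X\}$ with $X_0$ fixed, and the ancestor branch re-runs the construction above relative to the subgraph $\bcG_{\cD(X_0)}$ (reachable only when $X_0 \neq P$, since otherwise $P\notin\cC$ and the test on line~\ref{line:condition} must fail).

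The main obstacle, and essentially the only non-routine step, is the algebraic identity in the ancestor branches: rewriting the naturally arising set $\cD(X) \setminus \cD(\cW)$ in the form $\cD(X) \setminus \cD_{\cD(X)}(\cW')$ demanded by the definition of $\bcC_\bcG(P)$, together with the verification that $\cW'$ lies in $\cA_{\cD(X)}^c(P)$. Once this is in place, the rest of the proof reduces to bookkeeping built on $\cD(\cW_1 \cup \cW_2) = \cD(\cW_1) \cup \cD(\cW_2)$ and on the structural observation above.
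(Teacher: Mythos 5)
Your proof is correct and follows essentially the same route as the paper's: induction over recursive calls, a four-way split on the branch taken and on the form of $\cC$, the identity $\cD_\cC(X)=\cD(X)\cap\cC$ playing the role of the paper's proposition on descendant subgraphs, and the same key choice $\cW'=\cD(X)\cap\cD(\cW)$ in the ancestor branches. The only step you assert without justification is that $P\notin\cD_\cC(X)$ forces $X\in\cA^c(P)$ \emph{in the full graph} $\bcG$ (the paper devotes a paragraph to this); it does follow from your auxiliary observation when $P\in\cC$, and when $P\notin\cC$ (second form with $X_0=P$, or $P=\varnothing$) every node of $\cC$ is a proper descendant of $P$ and hence a non-ancestor, but you should make this explicit.
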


At a high level, \cref{alg:thealg} performs $\cO(\log n)$ interventions if for
each $\cC\in\bcC_{\bcG}(P)$ of large size, the number of ancestors of $P$ in
$\bcG_{\cC}$ is large, or the number of non-ancestors of $P$ each of which has
large number of descendants is large. The latter condition could be interpreted
as the condition that the non-descendants of $P$ asymptotically form a line
graph.  This is captured formally by the following result, proved in
\cref{appendix:log-bound}.

\begin{restatable}{theorem}{logcond}\label{thm:log-condition}
For a constant $0<\alpha<1$ and $\cC\in\bcC$, let the set of ``heavy''
non-ancestors to be
\begin{align}
\cH_\cC(\alpha)&:=\cbr{X\in\cA_\cC^c(P)
\big|\; \abs{\cD_\cC(X)}\geq\alpha\abs{\cC}}\label{eq:logn-c1}.
\end{align}
Assume that $\bcG$ is such that for any $\cC\in\bcC_{\bcG}(P)$
at least one of the following holds
i) $\abs{\cH_\cC(\alpha)}\geq\beta\abs{\cC}$,
ii) $\abs{\cA_\cC(P)}\geq\gamma\abs{\cC}$, or
iii) $\abs\cC\leq c\log^k(n)$,
%\begin{compactitem}
%\item $\abs{\cH_\cC(\alpha)}\geq\beta\abs{\cC}$,
%\item $\abs{\cA_\cC(P)}\geq\gamma\abs{\cC}$,
%\item $\abs\cC\leq c\log^k(n)$,
%\end{compactitem}
for fixed $0<\alpha,\beta,\gamma<1$, $c\in\R_{>0}$, and $k\geq1$.
Then, $\E[N(\bcG,P)]=\cO(\log^k n).$
\end{restatable}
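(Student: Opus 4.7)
My plan is to establish a single-step drift lemma---that as long as the current candidate set $\cC \in \bcC_{\bcG}(P)$ has size exceeding some constant $C_0 = C_0(\gamma)$, the next set $\cC'$ produced by \cref{alg:thealg} satisfies $\abs{\cC'} \leq (1-q)\abs{\cC}$ with probability at least $p$ for universal constants $p, q \in (0,1)$---and then to convert this into the $\cO(\log^k n)$ bound via a standard potential argument.

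Case B will be straightforward: uniform sampling from $\cC$ hits $\cH_\cC(\alpha)$ with probability at least $\beta$, and for any such $X$ the updated set $\cC' = \cC \setminus \cD_\cC(X)$ shrinks by a factor of $(1-\alpha)$ since $\abs{\cD_\cC(X)} \geq \alpha\abs{\cC}$. The hard part will be case C: a large ancestor fraction does not by itself force contraction, because $\cD_\cC(X) \setminus \cbr{X}$ can be nearly all of $\cC$ when $X$ sits high in the DAG. To address this, I would set $\delta = \gamma/4$ and define
\begin{equation*}
\mathcal{B} := \cbr{X \in \cA_\cC(P) \mid \abs{\cD_\cC(X)} > (1-\delta)\abs{\cC}}.
\end{equation*}
Picking $X_0 \in \mathcal{B}$ maximal under the ancestor order of $\bcG_\cC$ gives $\cD_\cC(X_0) \cap \mathcal{B} = \cbr{X_0}$, while $\abs{\cC \setminus \cD_\cC(X_0)} < \delta\abs{\cC}$ forces $\abs{\mathcal{B}} \leq 1 + \delta\abs{\cC}$. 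The complementary ``shrinking'' ancestors $\cA_\cC(P) \setminus \mathcal{B}$ therefore have size at least $\gamma\abs{\cC} - \delta\abs{\cC} - 1 \geq (\gamma/2)\abs{\cC}$ once $\abs{\cC} \geq 4/\gamma$, and sampling any of them leaves $\abs{\cC'} \leq (1-\delta)\abs{\cC}$. Taking $p = \min(\beta, \gamma/2)$ and $q = \min(\alpha, \gamma/4)$ completes the drift claim.

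For the global bound, I would invoke \cref{lemma:candidate-family} so that every recursive call keeps $\cC$ inside $\bcC_{\bcG}(P)$ and the drift keeps applying. Letting $\tau$ denote the first time $\abs{\cC_t} \leq \max(c \log^k n, C_0)$ and $Y_t := \log\abs{\cC_t}$, the drift yields $\E[Y_{t+1} - Y_t \mid \cC_t] \leq -\rho$ for $t < \tau$ with $\rho := p \log(1/(1-q)) > 0$; optional stopping on the supermartingale $Y_t + t\rho$ then gives $\E[\tau] \leq (\log n)/\rho = \cO(\log n)$. After $\tau$, at most $\abs{\cC_\tau} \leq \max(c \log^k n, C_0)$ additional interventions occur since each recursive call strictly reduces $\abs{\cC}$ by at least one. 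Summing the two contributions gives $\E[N(\bcG, P)] = \cO(\log n) + \cO(\log^k n) = \cO(\log^k n)$, using $k \geq 1$.

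The main obstacle is the combinatorial argument in case C: the hypothesis only constrains \emph{counts} of ancestors rather than their descendant structure, so absent an extra idea one could fear a regime where every intervention on an ancestor recovers essentially all of $\cC$. The maximal-element pigeonhole above is what rules this out and pins down that a constant fraction of ancestors are in fact shrinking, making per-step contraction available with constant probability.
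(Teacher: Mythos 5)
Your proof is correct and arrives at the same structural insight as the paper — that on every large candidate set a constant fraction of nodes, when sampled, contract $\abs{\cC}$ by a constant factor — but it packages the argument differently in two places. For the global conversion, the paper writes the expected cost as the recursion $T(\cC)=\frac{1}{\abs{\cC}}\sum_{X\in\cA_\cC(P)}T(\bar\cD_\cC(X))+\frac{1}{\abs{\cC}}\sum_{X\in\cA_\cC^c(P)}T(\cD_\cC^c(X))+1$ and proves $T(\cC)\leq c'\log\abs{\cC}+c\log^k(n)$ by induction (absorbing the $+1$ via $\frac{x}{1+x}\leq\log(1+x)$ and choosing $c'\geq\frac{1}{\alpha\beta}$), whereas you run a drift/optional-stopping argument on $\log\abs{\cC_t}$; these are equivalent in substance because each call spawns exactly one recursive call, so the candidate sets form a single strictly decreasing chain, and both conversions are legitimate. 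For the ancestor case, the paper takes the $\tfrac{\gamma}{2}\abs{\cC}$ ancestors of $P$ that are latest in a topological order of $\bcG_\cC$ and observes that each has at most $(1-\gamma/2)\abs{\cC}$ descendants since the earlier ancestors cannot be among them; your maximal-element pigeonhole bounding $\abs{\mathcal{B}}\leq 1+\delta\abs{\cC}$ reaches the same conclusion by a different counting, and it arguably makes more explicit exactly why a large ancestor count cannot coexist with all ancestors having near-full descendant sets. Both steps of yours are sound, including the reliance on \cref{lemma:candidate-family} to keep the hypothesis applicable along the chain. The only loose ends are cosmetic: $Y_\tau=\log 0$ is undefined if $\abs{\cC_t}$ drops from above the threshold to $0$ in one step (use $Y_t=\log(\abs{\cC_t}+1)$ or stop at $\tau\wedge\min\cbr{t:\cC_t=\emptyset}$), and the drift should be stated conditional on the whole history rather than on $\cC_t$ alone, which is immediate since the next set depends only on $\cC_t$ and a fresh uniform sample.
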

The assumption of \cref{thm:log-condition} states that for all
candidate sets considered by the recursive function in \cref{alg:thealg}
either the cardinality of $\cC$ is upper bounded by $c\log^k(n)$
or in the subgraphs $\bcG_\cC$ one of the following holds:
\begin{itemize}[leftmargin=*]
\item there is a $\beta$-fraction of nodes that are among the non-ancestors of
$P$ and have at least an $\alpha$-fraction of nodes as their descendants,
\label{item:line-condition}
\item there is a $\gamma$-fraction of nodes that are ancestors of $P$.
\end{itemize}

\begin{figure}
\centering
\begin{tikzpicture}[->,every node/.style=node]
\node (p) {$P$};
\node[right=of p] (x1) {$X_1$};
\node[right=of x1] (x2) {$X_2$};
\node[right=of x2,draw=none] (dots) {$\dots$};
\node[right=of dots,scale=0.9] (xnm1) {$X_{n-1}$};
\path
(p) edge (x1)
(x1) edge (x2)
(x2) edge (dots)
(dots) edge (xnm1);
\end{tikzpicture}
\caption{An example of a line graph, such graphs satisfy the
condition of \cref{thm:log-condition}.}\label{fig:line-graph}
\end{figure}
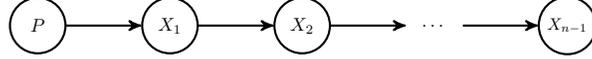

Notice that in the condition of \cref{thm:log-condition} there is no
restriction on the structure of the ancestors of $P$. Thus, if the number of
ancestors of $P$ is sufficient we have that \ac{raps} has at most logarithmic
number of distinct node interventions.  At the same time, when there are more
than constant number of non-ancestors, \cref{thm:log-condition} requires them
to have a certain structure.  Consider, for example, the line graph in
\cref{fig:line-graph}. In this case the candidate family consists of the sets
$\cbr{X_1,\dots,X_{i-1}}$ and $\cbr{P,X_1,\dots,X_{i-1}}$ for $i\in[n-1]$.  The
condition of \cref{thm:log-condition} still holds since for every candidate set
$\cC\in\bcC_{\bcG}(P)$ it holds that there is a $\beta$-fraction of ``heavy''
non-ancestors of $P$ in $\bcG_\cC$. Such non-ancestors contain many other
non-ancestors of $P$ as their descendants. To see this, let
$\cC=\cbr{X_1,\dots,X_{i-1}}$ for some $i\in[n-1]$ (the other case is similar)
and consider the set $\cbr{X_1,\dots,X_{\lfloor{i/2}\rfloor}}$. Each node in
this set has at least $i/2$ descendants and there are at least $i/2-1$ such
nodes.  Thus, the condition holds for $\alpha,\beta$ close to $\frac12$. Note
also that even if $P$ is not the first node in a topological ordering of a
graph but its' non-descendants still satisfy the condition of
\cref{thm:log-condition}, then the \ac{raps} succeeds in $\cO(\log^k n)$
expected number of interventions. Additionally, from \cref{thm:exact}
it is easy to see that the $\log{n}$ upper bound on the number of
distinct node interventions is tight.

At the same time, the condition of \cref{thm:log-condition} for non-descendants
in subgraphs over candidate sets of large size is more general than just
requiring all non-descendants to form a line graph. First, notice that if the
parent node has $\Omega(n)$ children, each with only one parent, then the
algorithm requires $\Omega(n)$ interventions no matter how the remaining nodes
are arranged. This case is similar to the case of $d$-ary trees for which our
result in \cref{appendix:lower-bound} implies $\Omega\pr{\frac{n}{\log_d{n}}}$
lower bound on distinct node interventions.  However, if, for example, half of
the nodes form a line and the other half are all children of the last node on
the line as shown in \cref{fig:n-branch}, then the condition of
\cref{thm:log-condition} is still satisfied and \ac{raps} remains in the fast
regime in terms of the number of distinct node interventions.

\begin{figure}
\centering
\begin{tikzpicture}[->,every node/.style={node}]
\node (x1) {$X_1$};
\node[right=of x1] (x2) {$X_2$};
\node[right=of x2,draw=none] (dots1) {$\dots$};
\node[right=of dots1] (xno2) {$X_{n/2}$};
\node[below=of x1,yshift=1.5em] (p) {$P$};
\node[below=of p,yshift=1.5em,scale=0.75] (xno2p1) {$X_{n/2+1}$};
\node[right=of xno2p1,scale=0.75] (xno2p2) {$X_{n/2+2}$};
\node[right=of xno2p2,draw=none] (dots2) {$\dots$};
\node[right=of dots2,scale=0.9] (xnm1) {$X_{n-1}$};
\path
(p) edge (x1)
(p) edge (x2)
(p) edge (dots1)
(p) edge (xno2)
(p) edge (xno2p1)
(p) edge (xno2p2)
(p) edge (dots2)
(p) edge (xnm1);
\path
(x1) edge (x2)
(x2) edge (dots1)
(dots1) edge (xno2)
(xno2) edge (xno2p1)
(xno2) edge (xno2p2)
(xno2) edge (dots2)
(xno2) edge (xnm1);
\end{tikzpicture}
\caption{An example of a graph where  $P$ has $n$ children, but only half of
them form a line graph ($X_1\to\dots\to X_{n/2}$).  $\cO(\log^k n)$ expected
number of interventions still suffices because the other half of the nodes are
all children of $X_{n/2}$.}\label{fig:n-branch}\vspace{-.5cm}
\end{figure}
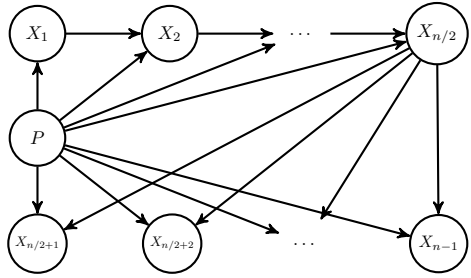

\subsubsection{Erd\H{o}s-R\'enyi Random Graphs}\label{sec:erdos-renyi}
In addition to providing examples of instances for which the condition of
\cref{thm:log-condition} holds, we show that Erd\H{o}s-R\'enyi random \acp{dag}
with large enough edge probability $p$ also satisfy this condition.  While
originally, Erd\H{o}s-R\'enyi model was proposed for undirected graphs, it
naturally extends to \acp{dag} as well \cite{hu2014randomized}.  For this,
label the nodes in $\cV$ from $1$ to $n$.  Next, select a permutation $\pi$
over $[n]$ uniformly at random.  Subsequently, for two nodes $i,j\in[n]$ such
that $i<j$ draw an edge $\cbr{i,j}$ with probability $p$. Finally, if an edge
$\cbr{i,j}$ is picked, orient it as $i\to j$ if $\pi(i)<\pi(j)$ and
$i\leftarrow j$ otherwise.  For such randomly generated \ac{dag}, the following
result, proved in \cref{appendix:erdos-renyi}, holds.

% Secondly, we construct an undirected Erd\H{o}s-R\'enyi random graph
% over $\cV$ that is each edge is included in the graph with probability
% $p$, independently from every other edge. Lastly, we orient the
% constructed graph using the selected permutation $\pi$. More
% precisely, an edge between two nodes $u$ and $v$ with labels $i$
% and $j$, respectively, will be oriented as $u\to v$ if $\pi(i)<\pi(j)$
% and $v\to u$, otherwise.  For such randomly generated \ac{dag}, we
% state the following result.
% proved in \cref{appendix:erdos-renyi}.
%Next, for two nodes $u,v\in\cV$ with labels $i$ and $j$, respectively, we draw
%an edge between $u$ and $v$,$\cbr{u,v}$ with probability $p$. Lastly, an edge
%$\cbr{i,j}$ is  oriented as $i\to j$ if $\pi(i)<\pi(j)$ and $i\leftarrow j$
%otherwise.

\begin{restatable}{corollary}{erdosrenyi}\label{corollary:erdos-renyi}
The family of Erd\H{o}s-R\'enyi random \acp{dag} satisfies the
condition of \cref{thm:log-condition} in expectation if
$p\geq1-\pr{\frac{1-c}{\log^kn-1}}^{1/(\log^kn-1)}$, for any
constant $c\in[0,1]$. Therefore, for such graphs, \ac{raps} requires
%\begin{equation*}
$\E[N(\bcG,P)]=\cO(\log^k n)$
%\end{equation*}
expected number of interventions.
\end{restatable}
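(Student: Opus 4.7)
The plan is to verify that the conditions of \cref{thm:log-condition} hold in expectation for every candidate set $\cC \in \bcC_{\bcG}(P)$ drawn from an Erd\H{o}s-R\'enyi DAG, and then to deduce $\E[N(\bcG,P)] = \cO(\log^k n)$. The first observation is that for any fixed $\cC$, the induced subgraph $\bcG_\cC$ is again an Erd\H{o}s-R\'enyi DAG on $\abs{\cC}$ nodes with the same edge probability $p$, because the random topological order restricts to a uniform permutation on $\cC$ and the edges inside $\cC$ are drawn independently. This lets me study each candidate set in isolation. Candidate sets with $\abs{\cC} \leq c\log^k n$ automatically satisfy the third bullet of \cref{thm:log-condition}, so the interesting regime is $m := \abs{\cC} > c\log^k n$.

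For large $\cC$, I would condition on the position $q$ of $P$ in the induced topological order and split into two subcases. The hypothesis on $p$ rearranges to $(1-p)^{\log^k n - 1} \leq (1-c)/(\log^k n - 1)$; equivalently, starting from any current ``frontier'' of size $s \geq \log^k n - 1$, every remaining vertex has an edge to the frontier with probability at least $c$. If $q \geq m/2$, I would analyze a backward BFS from $P$ restricted to positions $<q$: at each round an unexplored vertex joins the ancestor set with probability $1-(1-p)^s$, where $s$ is the current frontier size. Using the threshold on $p$ and a supercritical-branching-process comparison, I would show that this backward BFS covers a constant fraction of the $q-1$ available vertices, yielding $\E[\abs{\cA_\cC(P)}] \geq \gamma m$ for a fixed $\gamma > 0$ and hence the expected version of bullet (ii). If $q < m/2$, a symmetric forward BFS from each $Y$ at positions $j \in (q,(1-\alpha)m]$ reaches at least $\alpha m$ descendants with probability bounded below by a positive constant; summing gives $\E[\abs{\cH_\cC(\alpha)}] \geq \beta m$, the expected version of bullet (i).

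To transfer these per-candidate-set statements to the overall bound, I would adapt the proof of \cref{thm:log-condition} in the appendix by replacing its deterministic per-set estimates with their expected counterparts; because the recursion on $\E[N]$ is linear in the candidate-set statistics, taking an outer expectation over $\bcG$ preserves the $\cO(\log^k n)$ rate.

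The main obstacle is the BFS / branching-process estimate. The threshold on $p$ gives only a one-step edge-existence guarantee and, in fact, $p$ need not stay bounded away from $0$ as $n$ grows; upgrading this to ``the BFS reaches a constant fraction of the available vertices with constant probability'' requires careful bookkeeping for the shrinking pool of unexplored vertices as the exploration progresses, together with a supercriticality argument whose success probability is bounded below by a positive constant uniformly over all candidate sets of size $m > c \log^k n$ and over the position $q$ of $P$ inside them.
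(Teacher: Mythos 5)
Your outer structure matches the paper's: restrict attention to candidate sets $\cC$ of size $m>c\log^k n$ (smaller ones are handled by the third bullet), observe that $\bcG_\cC$ is again Erd\H{o}s--R\'enyi with the same $p$, condition on the position $q$ of $P$ in the induced topological order, and argue that $q\geq m/2$ yields the ancestor condition while $q<m/2$ yields a linear number of ``heavy'' non-ancestors taken from the positions after $P$. This is exactly the case analysis in \cref{appendix:erdos-renyi}. The problem is that the single load-bearing ingredient --- that in an induced Erd\H{o}s--R\'enyi sub-DAG of size $m\geq\log^k n$ with the stated $p$, the first node in the topological order has expected descendant count at least $cm$ (and symmetrically for ancestors of the last node) --- is precisely what you defer as ``the main obstacle.'' A branching-process/BFS route is plausible in principle, but as you note yourself the edge probability $p$ may tend to $0$, the exploration starts from a frontier of size one, and your rearranged hypothesis only controls steps once the frontier already has size $\geq\log^k n-1$; getting from a single seed to that frontier size, with a survival probability bounded below uniformly over all $m>c\log^k n$ and all positions $q$, is where all the work is, and none of it is carried out. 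Moreover, what is ultimately needed is a lower bound on the \emph{expectation} of $\abs{\cD(X)}$, so a ``constant probability of reaching a constant fraction'' statement would have to be made quantitative with explicit constants feeding into $\alpha,\beta,\gamma$. As written, the corollary is not proved.

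For comparison, the paper avoids the branching-process machinery entirely. Its \cref{thm:erdos-renyi} establishes $\E\abs{\cD(X)}\geq cn$ for the root by an exact recursion on the law of the number of descendants, $p_{n,i}=(1-(1-p)^{i-1})p_{n-1,i-1}+(1-p)^i p_{n-1,i}$, combined with the pointwise bound $p_{n,i}\leq(1-p)^{n-i}$ and a telescoping identity; the hypothesis on $p$ enters only through $(n-1)(1-p)^{n-1}\leq 1-c$, and the argument works directly in expectation with no survival-probability analysis and no dependence on $p$ staying bounded away from zero. The corollary then applies this theorem to the $m/4$ suffix subgraphs (for $q\leq m/2$) or to the prefix ending at $P$ (for $q>m/2$), using monotonicity of the threshold in the subgraph size. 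If you want to salvage your route you must actually prove the supercritical exploration lemma with uniform constants; otherwise the recursive computation is the shorter path. (Your final paragraph about pushing expectations through the recursion of \cref{thm:log-condition} is also only a sketch, but the paper's own ``in expectation'' phrasing glosses over the same point, so I would not count that against you.)
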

As shown in \cref{remark:erdos-renyi} in \cref{appendix:erdos-renyi},
the result of \cref{corollary:erdos-renyi} holds for
$p\geq\frac{\log(\log^k(n)-1)-\log(1-c)}{\log^k(n)-1}$ and
asymptotically the lower bound for $p$ in \cref{corollary:erdos-renyi}
behaves as $\Theta\pr{\frac{k\log\log(n)}{\log^k(n)}}$.

\subsection{Slow Regime}
%The condition of \cref{thm:log-condition} is fairly restrictive thus,
In this section, we provide a bound on the expected number of interventions of
\ac{raps} under a more relaxed assumption than the condition in
\cref{thm:log-condition}.  The following \lcnamecref{thm:sublinear} states that
if there are at most $\cO\pr{\frac{n}{\log_d(n)}}$ nodes
$X\in\cV\setminus\cbr{P}$ such that all paths between $X$ and $P$ are inactive,
then the expected number of interventions required by \ac{raps} is bounded by
$\cO\pr{\frac{n}{\log_d(n)}}$, where $d$ is the maximum degree in the skeleton
of $\bcG$.  Under the faithfulness assumption \cite{pearl2009causality}, the
aforementioned condition means that there are at most
$\cO\pr{\frac{n}{\log_d(n)}}$ nodes in $\cV$ which are independent with the
parent node $P$.
\begin{restatable}{theorem}{sublinear}\label{thm:sublinear}
Let $\bcG$ be an arbitrary \ac{dag} in which there are at most
$\cO\pr{\frac{n}{\log_d(n)}}$ nodes $X\in\cV\setminus\cbr{P}$ such
that either $P$ is disconnected with $X$, or all paths between
$P$ and $X$ are blocked by colliders.  Then,
$
\E[N(\bcG,P)]=\cO\pr{\frac{n}{\log_d n}},
$
where $d$ is the maximum degree in the skeleton of $\bcG$.
\end{restatable}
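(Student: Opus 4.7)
The plan is to apply \cref{thm:exact} and bound the resulting sum by partitioning $\cV$ according to skeleton distance from $P$. Writing $f(X) := \abs{(\cA(P) \triangle \cA(X)) \setminus \cbr{X}}$, the exact formula reads $\E[N(\bcG,P)] = \sum_{X \in \cV} 1/(f(X) + 1)$. I would fix $\ell := \lfloor \log_d(n/\log_d n) \rfloor = \Theta(\log_d n)$ and split $\cV$ into three sets: $\cV_b$, the nodes disconnected from $P$ or for which every path to $P$ is blocked by a collider; $\cV_n$, the remaining nodes within undirected skeleton distance $\ell$ of $P$; and $\cV_f$, all other nodes. The hypothesis directly gives $\abs{\cV_b} = \cO(n/\log_d n)$, and the standard max-degree ball bound yields $\abs{\cV_n} \leq 1 + d + \cdots + d^\ell = \cO(d^\ell) = \cO(n/\log_d n)$.

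Using the trivial per-term bound $1/(f(X) + 1) \leq 1$, the combined contribution of $\cV_b \cup \cV_n$ to the sum is already $\cO(n/\log_d n)$. It therefore suffices to handle $\cV_f$, and the main claim I would prove is that $f(X) = \Omega(\log_d n)$ for every $X \in \cV_f$. Given this, each such $X$ contributes at most $\cO(1/\log_d n)$ and their combined contribution is at most $n \cdot \cO(1/\log_d n) = \cO(n/\log_d n)$, which closes the argument.

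To prove this claim, I would fix $X \in \cV_f$ and choose a shortest active path $X = Y_0, Y_1, \ldots, Y_m = P$. Because $X$ lies outside the skeleton ball of radius $\ell$ around $P$, we have $m > \ell$, and since the path is collider-free it has a unique source $Y_{i^*}$. In the easy endpoint cases $i^* \in \cbr{0, m}$ the path is purely directed between $X$ and $P$, and acyclicity immediately places the $m - 1$ interior nodes entirely inside one side of the symmetric difference, giving $f(X) \geq m - 1$.

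The hard part will be the common-ancestor case $0 < i^* < m$, where an interior node $Y_j$ of the right sub-path is necessarily in $\cA(P)$ but might additionally lie in $\cA(X)$ via an independent directed route and thus fail to contribute to $\cA(P) \triangle \cA(X)$. The main leverage is minimality of the chosen path: if $Y_j \in \cA(X)$ via a directed witness of length $c$, splicing yields an active path from $X$ to $P$ of length $c + (m - j)$, so by minimality $c \geq j$. My plan is to combine this rigidity with the max-degree bound in a charging argument that attributes each such shortcut ancestor to distinct internal nodes of its long witness path, thereby showing that a constant fraction of the interior nodes of the two sub-paths must lie in $\cA(P) \triangle \cA(X)$, yielding $f(X) = \Omega(m) = \Omega(\log_d n)$ and completing the proof.
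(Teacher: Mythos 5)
Your high-level route---invoking \cref{thm:exact}, splitting $\cV$ into blocked, near, and far nodes, and showing $f(X):=\abs{\cA(P)\triangle\cA(X)\setminus\cbr{X}}=\Omega(\log_d n)$ for every far node with an unblocked connection to $P$---is sound, and that key inequality is in fact true. The gap is in your plan for proving it. The intermediate claim you aim for, that a constant fraction of the \emph{interior nodes of a shortest active path} lie in $\cA(P)\triangle\cA(X)$, is false. Take $X\leftarrow Y_1\leftarrow Y_2\to Y_3\to P$ as a shortest active path and add two directed detours $Y_1\to V_1\to V_2\to P$ and $Y_3\to U_1\to U_2\to X$: your minimality constraints ($c\geq j$ and its mirror on the left sub-path) are satisfied with equality, the original path remains shortest, yet $Y_1,Y_2,Y_3$ are now all common ancestors of $X$ and $P$ and contribute nothing to the symmetric difference; all of the symmetric-difference mass has migrated to the off-path nodes $V_1,V_2,U_1,U_2$. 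The construction scales: routing the detours through shared ``highways'' makes \emph{every} interior node of the chosen path a common ancestor. The charging argument you sketch does not recover $\Omega(m)$ either, because witnesses for different shortcut ancestors may share all of their internal nodes, and those internal nodes may themselves be common ancestors of $X$ and $P$, which forces a further descent that you have not controlled.

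The repair is to not use a shortest active path. Since $X$ has a collider-free path to $P$, either one of $X,P$ is an ancestor of the other (your easy endpoint cases), or $\cA(X)\cap\cA(P)\neq\emptyset$. In the latter case choose $Z\in\cA(X)\cap\cA(P)$ with no proper descendant in $\cA(X)\cap\cA(P)$, and concatenate directed paths $Z\to\dots\to X$ and $Z\to\dots\to P$ (they meet only at $Z$, since any other shared node would be a common ancestor strictly below $Z$). Every node on this path other than $Z$, $X$, $P$ lies in $\cA(X)\cup\cA(P)$ but, being a proper descendant of $Z$, not in $\cA(X)\cap\cA(P)$; hence it lies in $\cA(P)\triangle\cA(X)\setminus\cbr{X}$. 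The path has length at least the skeleton distance from $X$ to $P$, which exceeds $\ell$, so $f(X)\geq\ell-1$ with no charging needed. For comparison, the paper's proof does not pass through \cref{thm:exact}: it bounds the probability that \cref{alg:thealg} intervenes on $X$ by $2/m$ directly, conditioning on whether $P$ is intervened on before $X$, but the combinatorial heart is the same ``path with exactly one common ancestor.'' With the key claim repaired, your decomposition matches the paper's and yields a valid, arguably cleaner, alternative proof.
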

The proof of \cref{thm:sublinear} is in \cref{appendix:sublinear} and in
\cref{appendix:lower-bound} we show that $d$-ary directed trees are worst case examples
for which the bound is tight even though such trees do not have colliders.

\section{Generalization to Multiple Parent Nodes}\label{sec:multiparent}

Let $\cP$ be the set of all parent nodes of the reward node. We generalize
\cref{alg:thealg} to an algorithm that finds all the parent nodes of the reward
node by repeatedly discovering each of the parent nodes in
\cref{alg:multiparent} with a more detailed version of the same algorithm
presented in \cref{alg:full} in the appendix.\hfill

\begin{wrapfigure}{r}{0.5\linewidth}
\begin{minipage}{\linewidth}
\begin{algorithm}[H]
\caption{\ac{raps} for discovering multiple parents}\label{alg:multiparent}
\begin{algorithmic}[1]
\State $\hat{\cP}\gets\emptyset, \cS\gets\cV$
\While{True}
\State$\hat{P}\gets$ the result of running \cref{alg:thealg}
providing it with $\cS$ and
$\hat{\cP}$\label{line:call-thealg}
\Comment See remark
\If{$\hat{P}=\varnothing$}
\State\Break
\EndIf
\State $\hat{\cP}\gets\hat{\cP}\cup\cbr{\hat{P}}$
\State $\cS\gets\cS\setminus\cD(\hat{P})$
\EndWhile
\Return $\hat{\cP}$
\end{algorithmic}
\end{algorithm}
\end{minipage}
\end{wrapfigure}
\paragraph{Remark.} On \cref{line:call-thealg} \cref{alg:multiparent} calls
\cref{alg:thealg} to find a next parent node $P$ with the starting candidate
set being equal to $\cS$.  While previously \cref{alg:thealg} could use only
atomic interventions to determine if an arbitrary $X\in\cV$ is an ancestor of
$P$, in this case this algorithm will intervene on $\hat{\cP}\cup\cbr{X}$ to
find if there exists a realization that changes $R$ by changing the value of
$X$ while keeping the other values in the intervention set constant. In other
words, the mean reward estimate and empirical distributions over all the nodes
under intervention will be compared to the corresponding values under all
interventions over $\hat{\cP}$ and the descendants of $X$ are determined
similarly. If a change under same values of $\hat\cP$ but different values of
$X$ occurs, then $X$ concluded to be an ancestor of some parent node in
$\cP\setminus\hat\cP$. \Cref{alg:multiparent} uses the observation that if
$P,P'\in\cP$ and $P\in\bar\cA(P')$, then $P'$ will be discovered by
\cref{alg:thealg}, but not $P$.  This happens because even if $P$ is intervened
on, the algorithm would have to exclude the descendants of $P$ before returning
$P$ as the parent of the reward node. Afterwards, by recalling
\cref{alg:thealg} and providing it with $\hat{\cP}$ that contains $P'$, it will
be able to discover $P$ as another parent node.

As an example, consider the graph in \cref{fig:multiparent-example}. During the
first call to \cref{alg:thealg} the node $P_2$ will be discovered. In the
second call to \cref{alg:thealg} with $\hat{\cP}=\{P_2\}$, there needs to be an
intervention on $P_2$ in order to cut the causal link from $P_1$ to $P_2$ to
determine whether $P_1$ is a parent of the reward node $R$.

\begin{wrapfigure}{r}{0.3\linewidth}
\centering
\begin{tikzpicture}[->,every node/.style=node]
\node (p1) {$P_1$};
\node[right=of p1] (p2) {$P_2$};
\node[right=of p2,style=dashed] (r) {$R$};
\path
(p1) edge[bend left] (r)
(p1) edge (p2)
(p2) edge (r);
\end{tikzpicture}
\caption{An example \ac{dag} with multiple parent nodes.}
\label{fig:multiparent-example}
\end{wrapfigure}
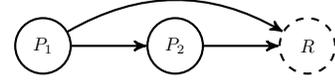
The following \lcnamecref{thm:multiparent} proved in
\cref{appendix:multiparent} generalizes the conditions of
\cref{thm:log-condition,thm:sublinear} such that the \cref{alg:multiparent}
discovers each parent node with the number of interventions as in
\cref{thm:log-condition,thm:sublinear}, respectively.  This is done by
considering all graphs from which the descendants of some subsequence of parent
nodes were removed. The nodes in the subsequence are selected as the last nodes
in some topological ordering of the nodes in $\cP$.
\begin{restatable}{theorem}{multiparent}\label{thm:multiparent}
Let $\btau(\cP)$ be the set of all topological orderings of the
parent nodes $\cP$. Assume that the condition of \cref{thm:log-condition}
holds for all graph-parent-node pairs with at least $c\log^k(n)$
nodes in the graph in the set
\begin{align*}\label{eq:multiparent-graphs}
\cbr{(\bcG_{\cV\setminus\cD(\cP)},\varnothing)}\cup
\biggl\{(\bcG_{\cV\setminus\cS(\tau,i)},\tau_i)
\bigm|\tau\in\btau(\cP), i\in[\abs{\cP}],
\cS(\tau,i)=\bigcup_{P\in\tau[i+1:]}\cD(P)\biggr\},
\end{align*}
where $\tau[i+1\hspace{-2pt}:]$ consists of the last $\abs{\cP}-i$ elements
of $\tau$, $\tau_i$ is the $i$-th element of $\tau$ and $c>0$ is some
constant.  Then the expected number of interventions required by
\cref{alg:multiparent} to find all parent nodes is
$\cO\pr{\abs{\cP}\log^k{n}}$.  Similarly, assume that all graph-parent-node
pairs in the set above with graphs of size at least $\frac{cn}{\log_d(n)}$
satisfy the condition of \cref{thm:sublinear}.  Then the expected number of
interventions required by \cref{alg:multiparent} is $\cO\pr{\frac{\abs{\cP}
n}{\log_d{n}}}$.
\end{restatable}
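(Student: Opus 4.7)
The plan is to decompose the total work of \cref{alg:multiparent} into the work done by its successive invocations of \ac{raps}, identify the graph-parent-node pair corresponding to each invocation, and apply \cref{thm:log-condition} (or \cref{thm:sublinear}) per invocation before summing.

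First, I would argue that \cref{alg:multiparent} discovers the parents of $R$ in the reverse of some topological ordering $\tau\in\btau(\cP)$. In the recursive function \textsc{rec} inside \ac{raps}, as soon as the sampled $X$ is certified as an ancestor of an undiscovered parent on \cref{line:condition}, the algorithm recurses into $\cD_\cC(X)\setminus\cbr{X}$ on \cref{line:ancestor-call}, and returns $X$ itself only if that recursion yields $\varnothing$. Consequently, the parent $\hat{P}$ returned to the outer loop cannot have any other element of $\cP\cap\cS$ as a strict descendant in the current candidate set $\cS$; that is, $\hat{P}$ is a sink of the induced subgraph on $\cP\cap\cS$. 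Because $\hat{P}$ is such a sink, removing $\cD(\hat{P})$ from $\cS$ excludes $\hat{P}$ from the remaining parents without excluding any other parent. Iterating, the parents are discovered as a reverse topological sequence $(\tau_{\abs{\cP}},\tau_{\abs{\cP}-1},\ldots,\tau_1)$ for some (random) $\tau\in\btau(\cP)$.

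Second, I would match the state of $\cS$ at each outer iteration to the graph-parent-node pairs in the theorem's assumption. Writing $i=\abs{\cP}-k+1$ for the $k$-th outer iteration, the set of descendants removed so far equals $\bigcup_{j>i}\cD(\tau_j)=\cS(\tau,i)$, so \ac{raps} is invoked on $\bcG_{\cV\setminus\cS(\tau,i)}$ to locate $\tau_i$, matching one of the pairs in the assumption. The last outer call, which returns $\varnothing$ and breaks out of the while loop, operates on $\bcG_{\cV\setminus\cD(\cP)}$, matching the distinguished pair $(\bcG_{\cV\setminus\cD(\cP)},\varnothing)$.

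Third, I would bound each call. If $\abs{\cV\setminus\cS(\tau,i)}>c\log^k(n)$, the assumption combined with \cref{thm:log-condition} yields $\cO(\log^k n)$ expected interventions. Otherwise the subgraph has at most $c\log^k(n)$ nodes, and the trivial observation that each node of the candidate set is intervened on at most once inside \ac{raps} (because the two recursive branches partition $\cC\setminus\cbr{X}$ on \cref{line:ancestor-call} and \cref{line:not-ancestor-call}) also gives $\cO(\log^k n)$. Summing over the at most $\abs{\cP}+1$ outer iterations yields the claimed $\cO(\abs{\cP}\log^k n)$. The sublinear bound is obtained by the same argument, invoking \cref{thm:sublinear} per call instead. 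The main obstacle is the first step: rigorously showing that \ac{raps} always returns a sink parent of the induced subgraph on the undiscovered parents. I would handle this by induction on the recursion depth of \textsc{rec}, relying on the fact that the recursive call into $\cD_\cC(X)\setminus\cbr{X}$ is always inspected first and overrides the returned value whenever it produces a non-$\varnothing$ answer.
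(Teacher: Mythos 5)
Your overall decomposition is the same as the paper's (per-iteration reduction to the listed graph--parent-node pairs, apply the single-parent theorems, sum over at most $\abs{\cP}+1$ iterations), and your first step --- that each call of \cref{alg:thealg} returns a sink of the induced subgraph on the undiscovered parents, so the parents are recovered in the reverse of some $\tau\in\btau(\cP)$ --- is correct and matches the remark accompanying \cref{alg:multiparent}. However, you have misplaced the main difficulty. The step you dispatch in one sentence, ``so \ac{raps} is invoked on $\bcG_{\cV\setminus\cS(\tau,i)}$ to locate $\tau_i$, matching one of the pairs in the assumption,'' is precisely where the real work lies, and as written it is a gap. The inner call of \cref{alg:thealg} is \emph{not} literally a run of single-parent \ac{raps} on the pair $(\bcG_{\cV\setminus\cS(\tau,i)},\tau_i)$: it operates inside the full multi-parent graph, intervenes on $\hat{\cP}\cup\cbr{X}$, and its test on \cref{line:condition} answers ``is $X$ an ancestor of \emph{some} undiscovered parent via a path avoiding $\hat{\cP}$,'' not ``is $X$ an ancestor of $\tau_i$.'' These two oracles can disagree, e.g.\ when $X$ is an ancestor of an incomparable undiscovered parent $\tau_j$ with $j<i$ but not of $\tau_i$. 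Since \cref{thm:log-condition} and \cref{thm:sublinear} (and the candidate-family machinery of \cref{lemma:candidate-family} they rest on) are proved only for the single-parent recursion, you cannot invoke them until you have shown that the sequence of candidate sets generated by the inner call is distributed identically to that of a plain run of \cref{alg:thealg} on $(\bcG_{\cV\setminus\cS(\tau,i)},\tau_i)$.

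The paper's proof is devoted almost entirely to establishing this coupling. Conditioning on the realized discovery order $\tau$, it checks three cases for the sampled node $X$: if $X\in\cA_{\cC}(\tau_i)$ then $X$ is also an ancestor of $\tau_i$ in the reduced graph (no ancestor of $\tau_i$ lies in $\cS(\tau,i)$), so both runs recurse into $\bar\cD_{\cC}(X)$; if $X$ is an ancestor of no undiscovered parent, both runs recurse into $\cD_{\cC}^c(X)$; and the problematic third case --- $X$ an ancestor of some undiscovered $P'\neq\tau_i$ but not of $\tau_i$ --- is ruled out because such a call would return $P'$, contradicting the conditioning on $\tau$. An induction then shows the candidate-set sequences in the two runs are equally likely, so the expected intervention counts coincide and the single-parent theorems apply to each pair in the assumed family. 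Your sink argument and your treatment of the small-subgraph and $\varnothing$ cases are fine, but without this equivalence-of-runs argument the appeal to \cref{thm:log-condition} per iteration is unjustified.
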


\Cref{thm:multiparent} gives general conditions for the upper bounds
on the number of interventions. Below, we combine our result for
Erd\H{o}s-R\'enyi graphs from \cref{sec:erdos-renyi} with the result
of \cref{thm:multiparent} to arrive at a condition on the probability
$p$ such that \cref{alg:multiparent} discovers all parents of the reward
node. The proof is in \cref{appendix:multiparent}.
\begin{restatable}{corollary}{ermultiparent}\label{corollary:ermultiparent}
Let $\bcG_{n,p}$ be an Erd\H{o}s-R\'enyi graph with
$p\geq1-\pr{\frac{1-c_0}{\log^k(c_1\log^k(n))-1}}^{
1/(\log^k(c_1\log^k(n))-1)}$ for some constants $c_0\in[0,1]$ and
$c_1\in\R_{>0}$, then to discover $\cP$,
\cref{alg:multiparent} needs $\E[N(\bcG,\cP)]=\cO\pr{\abs{\cP}\log^k(n)}$
expected number of interventions.
\end{restatable}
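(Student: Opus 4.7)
The plan is to combine \cref{corollary:erdos-renyi} with \cref{thm:multiparent}. The latter reduces multi-parent discovery to a per-parent analysis on a sequence of vertex-induced subgraphs, and the former controls each term in that sum whenever the subgraph in question is Erd\H{o}s-R\'enyi. The key structural observation enabling the reduction is that every vertex-induced subgraph of $\bcG_{n,p}$ on a fixed subset $\cV'\subseteq\cV$ is itself distributed as an Erd\H{o}s-R\'enyi random DAG on $\abs{\cV'}$ vertices with the \emph{same} edge probability $p$, because edges are included independently across distinct pairs and the uniform random topological permutation restricts to a uniform permutation on any subset.

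First I would invoke \cref{thm:multiparent}: it suffices to verify that the hypothesis of \cref{thm:log-condition} holds for each graph-parent-node pair $(\bcG_{\cV\setminus\cS(\tau,i)},\tau_i)$, and for $(\bcG_{\cV\setminus\cD(\cP)},\varnothing)$, whose vertex set has size $m$ exceeding $c\log^k(n)$. By the restriction property, each such subgraph is distributed as $\bcG_{m,p}$ for $m$ ranging in $[c_1\log^k(n),n]$, so by \cref{corollary:erdos-renyi} the required lower bound on $p$ is $1-\pr{(1-c_0)/(\log^k m-1)}^{1/(\log^k m-1)}$ and the per-parent cost is $\cO(\log^k m)=\cO(\log^k n)$.

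Second, I would show that the threshold $f(m):=1-\pr{(1-c_0)/(\log^k m-1)}^{1/(\log^k m-1)}$ is non-increasing in $m$ for all sufficiently large $m$, so that the binding case is the smallest admissible subgraph size $m=c_1\log^k(n)$. Setting $u:=\log^k m-1$, this reduces to showing that $h(u):=(\log(1-c_0)-\log u)/u$ is non-decreasing for $u\geq e(1-c_0)$, which follows from $h'(u)=(\log u-1-\log(1-c_0))/u^2\geq 0$, a routine calculus check. Plugging $m=c_1\log^k(n)$ back into $f$ recovers exactly the hypothesis stated in the corollary, and summing the per-parent bounds through \cref{thm:multiparent} and linearity of expectation produces $\E[N(\bcG,\cP)]=\cO(\abs{\cP}\log^k n)$.

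The main obstacle will be making the induced-subgraph step watertight when the vertex set is itself random: marginally $\bcG_{n,p}[\cV']$ is distributed as $\bcG_{\abs{\cV'},p}$ only when $\cV'$ is fixed, whereas $\cS(\tau,i)$ and $\cD(\cP)$ depend on $\bcG$ through the descendant relation. The clean remedy is to condition on the realization of $\cS(\tau,i)$ and verify that the heavy-non-ancestor condition of \cref{thm:log-condition}, which depends only on the density structure of the residual subgraph in expectation, continues to hold under this conditioning; the underlying reason is that \cref{corollary:erdos-renyi} bounds $\E[N]$ uniformly over the placement of the parent node within the Erd\H{o}s-R\'enyi model. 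A tower-expectation argument followed by summation over $i\in[\abs{\cP}]$ then delivers the final $\cO(\abs{\cP}\log^k n)$ bound.
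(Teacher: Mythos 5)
Your proposal is correct and follows essentially the same route as the paper's proof: reduce via \cref{thm:multiparent} to verifying the hypothesis of \cref{thm:log-condition} on each residual subgraph through \cref{corollary:erdos-renyi}, observe that the required lower bound on $p$ is monotone decreasing in the subgraph size so the binding case is the smallest admissible size $m=c_1\log^k(n)$, and substitute that value into the threshold. You additionally supply the explicit calculus check of the monotonicity (which the paper merely asserts) and flag the conditioning subtlety that the induced subgraphs live on graph-dependent vertex sets, a point the paper's one-paragraph proof leaves implicit.
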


\vspace{-.5em}
\section{Experiments}

\begin{wrapfigure}{r}{0.3\linewidth}
\includegraphics[width=0.3\textwidth]{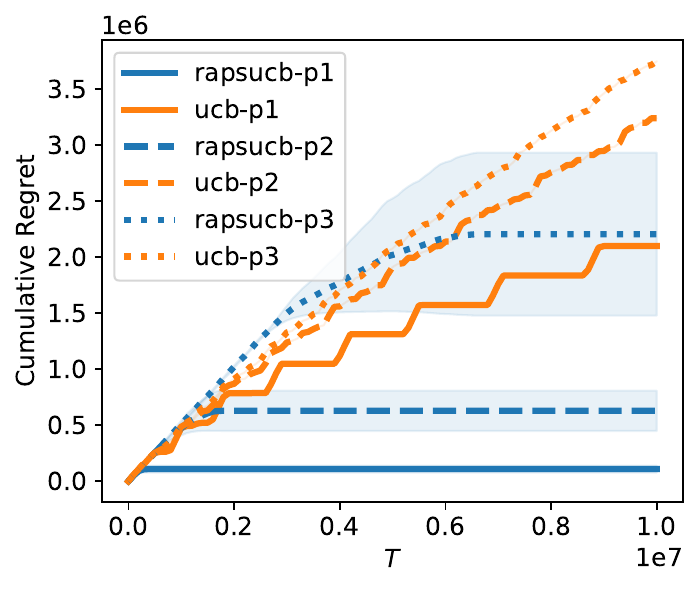}
\caption{Regret of \acs{ucb} and \ac{raps}+\acs{ucb} on
binary tree graph with $n=20$ nodes. The last character in the label
specifies the number of parent nodes.}\label{fig:regret}
\end{wrapfigure}

In this section we describe an experiment aimed at showing improved regret with
our approach. Other experiments that verify our theoretical findings about the
number of interventions that \ac{raps} takes, an analysis of the values of
$\eps$ and $\Delta$ in Erd\H{o}s-R\'enyi graphs, and the results of running the
combination of \ac{raps} and \acs{ucb} on such graphs are presented in
\cref{appendix:experiments}\footnote{The code to reproduce all of the
experiments is available at \url{https://github.com/BorealisAI/raps}.}. Here we
consider the case when the underlying graph is a binary tree with $n=20$ nodes
including the reward node, with the parent(s) of the reward node chosen
uniformly at random. The \ac{pcm} is such that the value of the root node is
chosen randomly and each other node passes the value of its' parent to its'
children with probability 0.9 and samples a value uniformly at random with
probability 0.1. All nodes in the graph take on binary values. The probability
of the reward node taking value $1$ is equal to the mean value of its' parents
and we set $\delta=0.01$. Even though our algorithm operates in the slow regime
on trees as discussed in \cref{appendix:lower-bound}, we still expect to see an
improvement since discovering the set of parent nodes drastically reduces the
set of arms that a standard multi-armed bandit algorithm needs to choose from.
The results are presented in \cref{fig:regret} with the number of parents
ranging from 1 to 3.  For each experiment we performed 10 independent runs and
presented their means and standard deviations. The results for \acs{ucb} do not
change significantly and thus the error bars for this algorithm could not be
seen. All our experiments (including the ones in the appendix) could be run
during a day on a single laptop.

\vspace{-.5em}
\section{Conclusion}
We proposed a causal bandit algorithm that does not require the knowledge of
the graph structure of the causal graph including the knowledge of the
essential graph. Our algorithm achieves improved regret by finding the set of
parent nodes of the reward node using causal structure of the arms and thus
reducing the number of arms a standard \ac{mab} algorithm needs to explore.

\nocite{*}
\bibliography{ref}

\begin{thebibliography}{28}
\providecommand{\natexlab}[1]{#1}
\providecommand{\url}[1]{\texttt{#1}}
\expandafter\ifx\csname urlstyle\endcsname\relax
  \providecommand{\doi}[1]{doi: #1}\else
  \providecommand{\doi}{doi: \begingroup \urlstyle{rm}\Url}\fi

\bibitem[Abbasi-Yadkori et~al.(2011)Abbasi-Yadkori, P{\'a}l, and
  Szepesv{\'a}ri]{abbasi2011improved}
Yasin Abbasi-Yadkori, D{\'a}vid P{\'a}l, and Csaba Szepesv{\'a}ri.
\newblock Improved algorithms for linear stochastic bandits.
\newblock \emph{Advances in neural information processing systems}, 24, 2011.

\bibitem[Agrawal and Goyal(2013)]{agrawal2013thompson}
Shipra Agrawal and Navin Goyal.
\newblock Thompson sampling for contextual bandits with linear payoffs.
\newblock In \emph{International conference on machine learning}, pages
  127--135. PMLR, 2013.

\bibitem[Capp{\'e} et~al.(2013)Capp{\'e}, Garivier, Maillard, Munos, and
  Stoltz]{cappe2013kullback}
Olivier Capp{\'e}, Aur{\'e}lien Garivier, Odalric-Ambrym Maillard, R{\'e}mi
  Munos, and Gilles Stoltz.
\newblock Kullback-leibler upper confidence bounds for optimal sequential
  allocation.
\newblock \emph{The Annals of Statistics}, pages 1516--1541, 2013.

\bibitem[De~Kroon et~al.(2022)De~Kroon, Mooij, and Belgrave]{de2022causal}
Arnoud De~Kroon, Joris Mooij, and Danielle Belgrave.
\newblock Causal bandits without prior knowledge using separating sets.
\newblock In \emph{Conference on Causal Learning and Reasoning}, pages
  407--427. PMLR, 2022.

\bibitem[Greenewald et~al.(2019)Greenewald, Katz, Shanmugam, Magliacane,
  Kocaoglu, Boix~Adsera, and Bresler]{greenewald2019sample}
Kristjan Greenewald, Dmitriy Katz, Karthikeyan Shanmugam, Sara Magliacane,
  Murat Kocaoglu, Enric Boix~Adsera, and Guy Bresler.
\newblock Sample efficient active learning of causal trees.
\newblock \emph{Advances in Neural Information Processing Systems}, 32, 2019.

\bibitem[Harris et~al.(2020)Harris, Millman, van~der Walt, Gommers, Virtanen,
  Cournapeau, Wieser, Taylor, Berg, Smith, Kern, Picus, Hoyer, van Kerkwijk,
  Brett, Haldane, del R{\'{i}}o, Wiebe, Peterson, G{\'{e}}rard-Marchant,
  Sheppard, Reddy, Weckesser, Abbasi, Gohlke, and Oliphant]{harris2020array}
Charles~R. Harris, K.~Jarrod Millman, St{\'{e}}fan~J. van~der Walt, Ralf
  Gommers, Pauli Virtanen, David Cournapeau, Eric Wieser, Julian Taylor,
  Sebastian Berg, Nathaniel~J. Smith, Robert Kern, Matti Picus, Stephan Hoyer,
  Marten~H. van Kerkwijk, Matthew Brett, Allan Haldane, Jaime~Fern{\'{a}}ndez
  del R{\'{i}}o, Mark Wiebe, Pearu Peterson, Pierre G{\'{e}}rard-Marchant,
  Kevin Sheppard, Tyler Reddy, Warren Weckesser, Hameer Abbasi, Christoph
  Gohlke, and Travis~E. Oliphant.
\newblock Array programming with {NumPy}.
\newblock \emph{Nature}, 585\penalty0 (7825):\penalty0 357--362, September
  2020.
\newblock \doi{10.1038/s41586-020-2649-2}.
\newblock URL \url{https://doi.org/10.1038/s41586-020-2649-2}.

\bibitem[Hauser and B{\"u}hlmann(2014)]{hauser2014two}
Alain Hauser and Peter B{\"u}hlmann.
\newblock Two optimal strategies for active learning of causal models from
  interventional data.
\newblock \emph{International Journal of Approximate Reasoning}, 55\penalty0
  (4):\penalty0 926--939, 2014.

\bibitem[Heckel et~al.(2019)Heckel, Shah, Ramchandran, and
  Wainwright]{heckel2019active}
Reinhard Heckel, Nihar~B Shah, Kannan Ramchandran, and Martin~J Wainwright.
\newblock Active ranking from pairwise comparisons and when parametric
  assumptions do not help.
\newblock \emph{The Annals of Statistics}, 47\penalty0 (6):\penalty0
  3099--3126, 2019.

\bibitem[Hu et~al.(2014)Hu, Li, and Vetta]{hu2014randomized}
Huining Hu, Zhentao Li, and Adrian~R Vetta.
\newblock Randomized experimental design for causal graph discovery.
\newblock \emph{Advances in neural information processing systems}, 27, 2014.

\bibitem[Hunter(2007)]{Hunter:2007}
J.~D. Hunter.
\newblock Matplotlib: A 2d graphics environment.
\newblock \emph{Computing in Science \& Engineering}, 9\penalty0 (3):\penalty0
  90--95, 2007.
\newblock \doi{10.1109/MCSE.2007.55}.

\bibitem[Huo and Fu(2017)]{huo2017risk}
Xiaoguang Huo and Feng Fu.
\newblock Risk-aware multi-armed bandit problem with application to portfolio
  selection.
\newblock \emph{Royal Society open science}, 4\penalty0 (11):\penalty0 171377,
  2017.

\bibitem[Jones(1994)]{jones1994generalized}
Charles~H Jones.
\newblock Generalized hockey stick identities and iv-dimensional blockwalking.
\newblock 1994.

\bibitem[Lattimore et~al.(2016)Lattimore, Lattimore, and
  Reid]{lattimore2016causal}
Finnian Lattimore, Tor Lattimore, and Mark~D Reid.
\newblock Causal bandits: Learning good interventions via causal inference.
\newblock \emph{Advances in Neural Information Processing Systems}, 29, 2016.

\bibitem[Lattimore and Szepesv{\'a}ri(2020)]{lattimore2020bandit}
Tor Lattimore and Csaba Szepesv{\'a}ri.
\newblock \emph{Bandit algorithms}.
\newblock Cambridge University Press, 2020.

\bibitem[Lee and Bareinboim(2018)]{lee2018structural}
Sanghack Lee and Elias Bareinboim.
\newblock Structural causal bandits: where to intervene?
\newblock \emph{Advances in Neural Information Processing Systems}, 31, 2018.

\bibitem[Liu et~al.(2020)Liu, See, Ngiam, Celi, Sun, Feng,
  et~al.]{liu2020reinforcement}
Siqi Liu, Kay~Choong See, Kee~Yuan Ngiam, Leo~Anthony Celi, Xingzhi Sun,
  Mengling Feng, et~al.
\newblock Reinforcement learning for clinical decision support in critical
  care: comprehensive review.
\newblock \emph{Journal of medical Internet research}, 22\penalty0
  (7):\penalty0 e18477, 2020.

\bibitem[Lu et~al.(2020)Lu, Meisami, Tewari, and Yan]{lu2020regret}
Yangyi Lu, Amirhossein Meisami, Ambuj Tewari, and William Yan.
\newblock Regret analysis of bandit problems with causal background knowledge.
\newblock In \emph{Conference on Uncertainty in Artificial Intelligence}, pages
  141--150. PMLR, 2020.

\bibitem[Lu et~al.(2021)Lu, Meisami, and Tewari]{lu2021causal}
Yangyi Lu, Amirhossein Meisami, and Ambuj Tewari.
\newblock Causal bandits with unknown graph structure.
\newblock \emph{Advances in Neural Information Processing Systems},
  34:\penalty0 24817--24828, 2021.

\bibitem[Mokhtarian et~al.(2022)Mokhtarian, Akbari, Jamshidi, Etesami, and
  Kiyavash]{mokhtarian2022learning}
Ehsan Mokhtarian, Sina Akbari, Fateme Jamshidi, Jalal Etesami, and Negar
  Kiyavash.
\newblock Learning bayesian networks in the presence of structural side
  information.
\newblock In \emph{Proceedings of the AAAI Conference on Artificial
  Intelligence}, volume~36, pages 7814--7822, 2022.

\bibitem[Nair et~al.(2021)Nair, Patil, and Sinha]{nair2021budgeted}
Vineet Nair, Vishakha Patil, and Gaurav Sinha.
\newblock Budgeted and non-budgeted causal bandits.
\newblock In \emph{International Conference on Artificial Intelligence and
  Statistics}, pages 2017--2025. PMLR, 2021.

\bibitem[Pearl(2009)]{pearl2009causality}
Judea Pearl.
\newblock \emph{Causality}.
\newblock Cambridge university press, 2009.

\bibitem[Sawant et~al.(2018)Sawant, Namballa, Sadagopan, and
  Nassif]{sawant2018contextual}
Neela Sawant, Chitti~Babu Namballa, Narayanan Sadagopan, and Houssam Nassif.
\newblock Contextual multi-armed bandits for causal marketing.
\newblock \emph{arXiv preprint arXiv:1810.01859}, 2018.

\bibitem[Sen et~al.(2017)Sen, Shanmugam, Dimakis, and
  Shakkottai]{sen2017identifying}
Rajat Sen, Karthikeyan Shanmugam, Alexandros~G Dimakis, and Sanjay Shakkottai.
\newblock Identifying best interventions through online importance sampling.
\newblock In \emph{International Conference on Machine Learning}, pages
  3057--3066. PMLR, 2017.

\bibitem[Shanmugam et~al.(2015)Shanmugam, Kocaoglu, Dimakis, and
  Vishwanath]{shanmugam2015learning}
Karthikeyan Shanmugam, Murat Kocaoglu, Alexandros~G Dimakis, and Sriram
  Vishwanath.
\newblock Learning causal graphs with small interventions.
\newblock \emph{Advances in Neural Information Processing Systems}, 28, 2015.

\bibitem[Silva et~al.(2022)Silva, Werneck, Silva, Pereira, and
  Rocha]{silva2022multi}
N{\'\i}collas Silva, Heitor Werneck, Thiago Silva, Adriano~CM Pereira, and
  Leonardo Rocha.
\newblock Multi-armed bandits in recommendation systems: A survey of the
  state-of-the-art and future directions.
\newblock \emph{Expert Systems with Applications}, 197:\penalty0 116669, 2022.

\bibitem[Squires et~al.(2020)Squires, Magliacane, Greenewald, Katz, Kocaoglu,
  and Shanmugam]{squires2020active}
Chandler Squires, Sara Magliacane, Kristjan Greenewald, Dmitriy Katz, Murat
  Kocaoglu, and Karthikeyan Shanmugam.
\newblock Active structure learning of causal dags via directed clique trees.
\newblock \emph{Advances in Neural Information Processing Systems},
  33:\penalty0 21500--21511, 2020.

\bibitem[Yao(1977)]{yao1977probabilistic}
Andrew Chi-Chin Yao.
\newblock Probabilistic computations: Toward a unified measure of complexity.
\newblock In \emph{18th Annual Symposium on Foundations of Computer Science
  (sfcs 1977)}, pages 222--227. IEEE Computer Society, 1977.

\bibitem[Zhang et~al.(2022)Zhang, Chen, and Singh]{zhang2022causal}
Jingwen Zhang, Yifang Chen, and Amandeep Singh.
\newblock Causal bandits: Online decision-making in endogenous settings.
\newblock \emph{arXiv preprint arXiv:2211.08649}, 2022.

\end{thebibliography}
\bibliographystyle{plainnat}

\newpage
\appendix
\onecolumn

\section{Exact Number of Interventions}\label{appendix:exact}
\begin{algorithm}[tb]
\caption{An algorithm equivalent to \cref{alg:thealg}.}\label{alg:equiv}
\begin{algorithmic}[1]
\Require{Set of nodes $\cV$ of $\bcG$ given as input}
\Ensure{The parent node $P\in\cV$ of the reward node or $\varnothing$
if there is no parent node in $\cV$}
\State Sample a random permutation $\tau$ of nodes in $\cV$
\State $\hat{P}\gets\varnothing,i\gets0$
\For{$X\in\tau$}
\State $i\gets i+1$
\If{$\cA(P)\triangle\cA(X)\cap(\tau_1,\dots,\tau_{i-1})\neq\emptyset$}
\label{line:non-common-parent-condition}
\State\Continue
\EndIf
\State Intervene on $X$ to determine if $P\in\cD(X)$
\If{$P\in\cD(X)$}
\State $\hat{P}\gets X$
\EndIf
\EndFor
\State\Return $\hat{P}$
\end{algorithmic}
\end{algorithm}

\exact*
\begin{proof}
First we show that \cref{alg:thealg} is equivalent to \cref{alg:equiv} in the
sense that the same sequences of nodes are intervened on by both algorithms
with the same probability. Although the \cref{alg:equiv} is not practical
because it uses the graph structure on \cref{line:non-common-parent-condition}, it allows us to present a proof for this theorem.
This algorithm first samples a permutation $\tau$ of nodes in $\cV$ and then
intervenes on a node $X\in\tau$ only if none of the nodes in
$\cA(P)\triangle\cA(X)$ appeared before $X$ in the permutation.
As an example, suppose that \cref{alg:equiv} selects permutation $(X_3,X_2,X_1,P)$  in \cref{fig:example}.
Then the nodes intervened on by \cref{alg:equiv} are the same as the nodes
intervened on by \cref{alg:thealg} in the example in \cref{sec:algorithm}. This
is because $X_2$ is a descendant of $X_3$ (and $X_3$ is not an ancestor of $P$)
and therefore it will not be intervened on.
Moreover, if \cref{alg:equiv} selects permutations
$(X_3,X_1,X_2,P)$ and $(X_3,X_1,P,X_2)$, the resulting intervention sequences would be  the same run of
\cref{alg:equiv}.

By induction on the intervened on nodes, the base case is clear since in both
\cref{alg:thealg} and \cref{alg:equiv} the first node is sampled with
probability $1/n$ and always intervened on. Let $\bW=(W_1,\dots,W_l)$ be a
uniformly random permutation of any $l$ elements of $\cV$ and $\bW'$ be a
subsequence of $\bW$ with an element of $W\in\bW$ included in $\bW'$ if no
element of $\cA(P)\triangle\cA(W)\setminus\cbr{W}$ was included before it. For
the example in \cref{fig:example} and sequence $\bW=(X_3,X_2)$ we have
$\bW'=(X_3)$ since, as mentioned before,
$X_3\in\cA(P)\triangle\cA(X_2)\setminus\cbr{X_2}$. Note that in
\cref{alg:thealg} a node could be intervened on only when it was not intervened
on before and none of the non-common ancestors of that node and the parent node
were intervened on. Thus, let
$\cS=\cbr{X\in\cV:\cA(P)\triangle\cA(X)\cup\cbr{X}\cap\bW'=\emptyset}$ be the
set of nodes that could be intervened on by \cref{alg:thealg} in the next
round. The probability that a node $X\in\cS$ is sampled by \cref{alg:thealg}
given the sequence $\bW'$ is $1/s$, where $s=\abs\cS$.
On the other hand, the probability that a node
$X\in\cS$ is intervened on next by \cref{alg:equiv} is
\begin{align}
&\prob\cbr{\text{$X$ is sampled before
$\cA(P)\triangle\cA(X)\setminus\cbr{X}$}|\bW} \\
&\quad=\sum_{k=0}^{n-l-s}\prob\cbr{
\bW_{l+1:l+k}\cap\pr{\cA(P)\triangle\cA(X)\setminus\cbr{X}}=\emptyset
\text{ and }W_{l+1+k}=X|\bW} \\
&\quad=\sum_{k=0}^{n-l-s}\frac{\binom{n-l-s}{k}k!(n-l-k-1)!}{(n-l)!} \\
&\quad=\sum_{k=0}^{n-l-s}\frac{(n-l-s)!(n-l-k-1)!}{(n-l-s-k)!(n-l)!} \\
&\quad=\frac1s\sum_{k=0}^{n-l-s}\frac{\binom{n-l-k-1}{s-1}}{\binom{n-l}{s}}
=\frac1s,
\end{align}
where $\bW_{l+1:l+k}$ consists of $k$ elements sampled uniformly at random
after sampling $\bW$, we used the fact that there are $n-l-s$ ``good`` elements
from which we need to sample $k$ elements before sampling $W_{l+1+k}=X$ while
the remaining elements could come in any order, and to get the last equality we
used the hockey-stick identity~\cite{jones1994generalized}. The hockey-stick
identity states that for any $n,r\in\mathbb{N}$ such that $n\geq r$ it holds
that $\sum_{i=r}^n\binom{i}{r}=\binom{n+1}{r+1}$. By the chain rule of
probability, the intermediate result holds.

Next, with $\bW=(W_1,\dots,W_n)$ being a uniformly random permutation
corresponding to a run of \cref{alg:equiv}, and defining
$A_X=\cbr{\text{\cref{alg:equiv} intervenes on $X$}}$,
$\bW_{<i}=(W_1,\dots,W_{i-1})$ we can get
\begin{align}
&\E[N(\bcG,P)]=\E[\sum_{X\in\cV}\I\cbr{A_X}]=\sum_{X\in\cV}\prob\cbr{A_X} \\
&\quad=\sum_{X\in\cV}\sum_{i=1}^{n-\abs{\cA(P)\triangle\cA(X)\setminus\cbr{X}}}
\prob\cbr{\cA(P)\triangle\cA(X)\setminus\cbr{X}\cap\bW_{<i}=\emptyset|W_i=X}
\prob\cbr{W_i=X} \\
&\quad=\sum_{X\in\cV}\sum_{i=1}^{n-\abs{\cA(P)\triangle\cA(X)\setminus\cbr{X}}}
\frac{
\binom{n-\abs{\cA(P)\triangle\cA(X)\setminus\cbr{X}}-1}{i-1}(i-1)!(n-i)!}{
(n-1)!}
\cdot\frac1n \\
&\quad=\sum_{X\in\cV}\frac1{\abs{\cA(P)\triangle\cA(X)\setminus\cbr{X}}+1}
\sum_{i=1}^{n-\abs{\cA(P)\triangle\cA(X)\setminus\cbr{X}}}
\frac{\binom{n-i}{\abs{\cA(P)\triangle\cA(X)\setminus\cbr{X}}}}{
\binom{n}{\abs{\cA(P)\triangle\cA(X)\setminus\cbr{X}}+1}} \\
&\quad=\sum_{X\in\cV}\frac1{\abs{\cA(P)\triangle\cA(X)\setminus\cbr{X}}+1},
\end{align}
where we used the fact that to have
$\cA(P)\triangle\cA(X)\setminus\cbr{X}\cap\bW_{<i}=\emptyset$ we need to sample
$i-1$ ``good`` elements from $n-\abs{\cA(P)\triangle\cA(X)\setminus\cbr{X}}-1$
elements (since $W_i$ is fixed to be $X$) while the rest of the $n-i$ elements
could be shuffled, and to get the last equality we used the hockey-stick
identity \cite{jones1994generalized}.
\end{proof}

\section{Logarithmic Upper Bound}\label{appendix:log-bound}

\cfamily*

The proof of this lemma uses the following proposition.
\begin{proposition}\label{proposition:descendant-subgraph}
For a graph $\bcG=(\cV,\cE)$ let $\cU\subseteq\cV$ and $\cS\supseteq\cD(\cU)$,
then $\cD_\cS(\cU)=\cD(\cU)$.
\end{proposition}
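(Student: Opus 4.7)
The plan is to prove the two inclusions $\cD_\cS(\cU)\subseteq\cD(\cU)$ and $\cD(\cU)\subseteq\cD_\cS(\cU)$ separately. The forward inclusion is immediate since any directed path in the induced subgraph $\bcG_\cS$ is also a directed path in $\bcG$, so any descendant in $\bcG_\cS$ is automatically a descendant in $\bcG$. This direction does not even use the hypothesis $\cS\supseteq\cD(\cU)$.

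For the reverse inclusion, I would fix an arbitrary $Y\in\cD(\cU)$ and pick $X\in\cU$ together with a directed path $X=Z_0\to Z_1\to\dots\to Z_k=Y$ in $\bcG$ witnessing $Y\in\cD(X)$. The key observation is that every intermediate vertex $Z_i$ is itself a descendant of $X$, hence lies in $\cD(X)\subseteq\cD(\cU)\subseteq\cS$ by hypothesis. Therefore the entire path stays within $\cS$, and since $\bcG_\cS$ is the vertex-induced subgraph, every edge $Z_{i-1}\to Z_i$ is preserved in $\bcG_\cS$. This shows $Y\in\cD_\cS(X)\subseteq\cD_\cS(\cU)$.

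Combining both inclusions yields the claim. The only point that requires any care is the recognition that descendants of descendants are descendants (by concatenating directed paths), which ensures the witnessing path lies entirely inside $\cS$. No serious obstacle is expected; this is essentially a hereditary property of induced subgraphs under downward-closed vertex sets.
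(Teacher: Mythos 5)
Your proof is correct and follows essentially the same argument as the paper's: the reverse inclusion by observing that every vertex on a witnessing directed path lies in $\cD(\cU)\subseteq\cS$ so the path survives in the induced subgraph, and the forward inclusion because induced subgraphs cannot create new paths. No issues.
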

\begin{proof}
Let $X\in\cD(\cU)$, then there is a directed path from some $U\in\cU$
to $X$ in $\bcG$. Each node on this path belongs to
$\cD(U)\subseteq\cD(\cU)\subseteq\cS$.  Thus, in the graph $\bcG_\cS$
this path is preserved and we get $X\in\cD_\cS(\cU)$.  Conversely,
if $X\in\cD_\cS(\cU)$, then there is a directed path from some $U\in\cU$
to $X$ in $\bcG_\cS$. Adding vertices and connections to a graph
does not remove existing paths and thus there is a path from $U\in\cU$
to $X$ in $\bcG$ and therefore $X\in\cD(\cU)$.
\end{proof}
\begin{proof}[Proof of lemma \cref{lemma:candidate-family}]
In the proof we omit writing $\bcC_{\bcG(P)}$ and write $\bcC$
instead for simplicity. The proof is by induction. For the base
case we have $\cC=\cV=\cV\setminus\cD(\emptyset)\in\bcC$. By induction
hypothesis
\begin{align}
\cC&=\cV\setminus\cD(\cW) \text{ for some }
\cW\subseteq \cV\setminus\cA(P) \text{, or}
\label{eq:cc-first}\\
\cC&=\bar\cD(Z)\setminus\cD_{\cD(Z)}(\cW)\text{ for some }
Z\in\cA(P),\cW\subseteq\cD(Z)\setminus\cA_{\cD(Z)}(P).
\label{eq:cc-second}
\end{align}
Notice that in \cref{eq:cc-second} we have
$\cD(\cW)\subseteq\cD(\cD(Z)\setminus\cA_{\cD(Z)}(P))\subseteq\cD(Z)$ and
thus by \cref{proposition:descendant-subgraph} it could be rewritten
as
\begin{align}
C=\bar\cD(Z)\setminus\cD(\cW)\text{ for some } Z\in\cA(P),
\cW\subseteq\cD(Z)\setminus\cA_{\cD(Z)}(P).\label{eq:cc-third}
\end{align}
Next, we will consider four cases depending on whether the condition
$P\in\cD_\cC(X)$ on \cref{line:condition} of \cref{alg:thealg}
holds and whether $\cC$ conforms to \cref{eq:cc-first} or
\cref{eq:cc-third}.
\begin{enumerate}[label=Case \Roman*:,wide=0pt]
\item $P\in\cD_\cC(X)$. Consider the set $\cD(X)\setminus\cD(\cW')$
for some $\cW'\subseteq\cD(X)$. It consists precisely of descendants
of $X$ but not $Y\in\cW'\subseteq\cD(X)$.  Thus, we can remove all
nodes $Y\in\cD(\cW')$ from the original graph $\bcG$ and the
descendants of $X$ in this new graph will be equal to
$\cD(X)\setminus\cD(\cW')$:
\begin{align}
\cD(X)\setminus\cD(\cW')=\cD_{\cV\setminus\cD(\cW')}(X).
\end{align}
Next, we consider two subcases depending on whether $\cC$ conforms
to \cref{eq:cc-first} or \cref{eq:cc-third}.
\begin{enumerate}[label=\roman*),wide=0pt]
\item\label{item:candidate-family-subcase-cc-first} $\cC$ is such
that \cref{eq:cc-first} holds for some $\cW\subseteq\cV\setminus\cA(P)$.
Then using the above, it is left to show that there exists
$\cW'\subseteq\cD(X)\setminus\cA_{\cD(X)}(P)$ such that
$\cD_{\cV\setminus\cD(\cW')}(X)=\cD_{\cV\setminus\cD(\cW)}(X)
=\cD_\cC(X)$ since $\bar\cD(X)$ is what the recursive function
will be called with in \cref{alg:thealg}.  For this we can set
$\cW'=\cD(\cW)\cap\cD(X)$. Indeed,
$\cW$ does not contain any ancestors of $P$ and thus $\cD(\cW)$
does not contain ancestors of $P$ in $\bcG$ which means that
$\cW'\cap\cA_{\cD(X)}(P)=\emptyset$. Moreover, $\cV\setminus\cD(\cW)$
removes only non-descendants of $X$ from $\bcG$ compared to
$\cV\setminus\cD(\cW')$. This is true because
\begin{align}
\cD(X)\cap(\cV\setminus(\cD(\cW)\cap\cD(X)))
=\cD(X)\setminus\cD(\cW)\subseteq\cV\setminus\cD(\cW)
\end{align}
and $\bar\cD_\cC(X)\in\bcC$ follows from
\cref{proposition:descendant-subgraph}.

\item $\cC$ is such that \cref{eq:cc-third} holds for some
$Z\in\cA(P),\cW\subseteq\cD(Z)\setminus\cA_{\cD(Z)}(P)$. Similarly
to the \cref{item:candidate-family-subcase-cc-first} for
$\cW'=\cD(\cW)\cap\cD(X)$ we get
\begin{align}
\cD(X)\cap(\cV\setminus(\cD(\cW)\cap\cD(X)))=\cD(X)\setminus\cD(\cW)
\subseteq\bar\cD(Z)\setminus\cD(\cW),
\end{align}
where the last step follows from the fact that $Z\in\bar\cA(X)$
which is true because $X\in\cC\subseteq\bar\cD(Z)$. Additionally,
$\cW'\subseteq\cD(X)\setminus\cA_{\cD(X)}(P)$. This is true because
$\cW$ does not contain the ancestors of $P$ in $\bcG_{\cD(Z)}$ and $\cD(X)$
is a subset of $\cD(Z)$ since $Z\in\bar\cA(X)$.  Finally, the fact
that $\bar\cD_{\cC}(X)\in\bcC$ follows from combining the results
above and \cref{proposition:descendant-subgraph}.
\end{enumerate}

\item $P\in\cD_\cC^c(X)$ which is equivalent to $X\in\cA_\cC^c(P)$.
In fact, we will show that $X\not\in\cA(P)$ by contradiction.
To this end, assume that $X\in\cA(P)$. In \cref{alg:thealg}
the candidate set
$\cC$ with which the recursive function is called decreases in
size during each call. At the same time, if at some point sample
$Y\in\cA(P)\cap\cA^c(X)$ gets sampled, then the candidate set will
consists of a subset of descendants of $Y$ to which $X$ does not
belong. Thus, to sample $X$ at some point we must not have sampled
such $Y$ before which means $\cA(P)\cap\cA^c(X)\subseteq\cC$. In
particular, this means that $\cA(P)\cap\cD(X)\subseteq\cC$ which
intern leads to $X\in\cA_\cC^c(P)$ which is a contradiction. Thus,
$X\not\in\cA(P)$.

In what follows, we again consider two subscases depending on whether
$\cC$ conforms to \cref{eq:cc-first} or \cref{eq:cc-third}.

\begin{enumerate}[label=\roman*),wide=0pt]
\item $\cC=\cV\setminus\cD(\cW)$ for some $\cW\subseteq\cA^c(P)$.
First, we show that
\begin{align}
\cD(X)\setminus\cD_\cC(X)\subseteq\cD(\cW).
\end{align}
Indeed, for $Y\in\cD(X)\setminus\cD_\cC(X)$ there must be a directed
path from $X$ to $Y$ in $\bcG$ but not in $\bcG_{\cV\setminus\cD(\cW)}$
which corresponds to the original graph with the descendants of
$\cW$ removed. Therefore, the descendants of $\cW$ block all the
directed paths from $X$ to $Y$ and therefore $Y\in\cD(\cW)$.  Next,
notice that since $X\in\cA^c(P)$ we can set $\cW'=\cW\cup\cbr{X}
\subseteq\cA^c(P)$ and using the result above get
\begin{align}
\cC\setminus\cD_\cC(X)=\cV\setminus\cD(\cW)\setminus\cD_\cC(X)
=\cV\setminus\cD(\cW)\setminus\cD(X)=\cV\setminus\cD(\cW')\in\bcC.
\end{align}
\item $\cC=\bar\cD(Z)\setminus\cD(\cW)$ for some $Z\in\cA(P),
\cW\subseteq\cD(Z)\setminus\cA_{\cD(Z)}(P)$. Again, consider directed
paths from $X$ to any $Y\in\cD(X)\setminus\cD_\cC(X)$.
Note that
$\cD(X)\setminus\cD_\cC(X)=\cD_{\bar\cD(Z)}(X)\setminus\cD_{\bar\cD(Z)\setminus\cD(\cW)}(X)$ where
we used \cref{proposition:descendant-subgraph} with the fact that
$X\in\bar\cD(Z)$ which implies $\cD(X)\subseteq\bar\cD(Z)$, and the
definition of $\cC$. Similarly to the previous item we get that
removing the set $\cD(\cW)$ from $\bcG_{\bar\cD(Z)}$ removes all
paths from $X$ to $Y$ in this graph and thus $Y\in\cD(\cW)$ or
$\cD(X)\setminus\cD_\cC(X) \subseteq\cD(\cW)$. Setting
$\cW'=\cW\cup\cbr{X}$ as before gives
$\cW'\subseteq\cD(Z) \setminus\cA_{\cD(Z)}(P)$ since
$X\in\bar\cD(Z)$ and as stated above $X\not\in\cA(P)$.
Thus, by \cref{proposition:descendant-subgraph} we get the
desired result.\qedhere
\end{enumerate}
\end{enumerate}
\end{proof}

\logcond*
\begin{proof}
Based on the definition of \cref{alg:thealg}, it is straightforward to see that the following recursion holds
%The proposed algorithm satisfies the following recursion
\begin{equation}
T(\cC)=\frac1{\abs{\cC}}\sum_{X\in\cA_\cC(P)} T(\bar\cD_\cC(X))
+ \frac1{\abs\cC}\sum_{X\in\cA_\cC^c(P)} T(\cD_\cC^c(X))
+ 1,\label{eq:recursion}
\end{equation}
where $T(\cC)$ denotes the number of interventions performed by
the recursive function in \cref{alg:thealg} given candidate set $\cC$.
We will
show that $T(\cC)\leq c'\log\abs\cC + c\log^k(n)$ for some $c'>0$ by considering two cases: $\abs\cC\leq c\log(n)$ and $\abs\cC> c\log(n)$.
The case where $\abs\cC\leq c\log(n)$ is straightforward since each node in $\cC$ is intervened on at most once and $\abs\cC\leq c\log^k(n)$.

Next, consider the case $\abs\cC>c\log^k(n)$. We provide a proof for
the case when $\abs{\cH_\cC(\alpha)}\geq\beta\abs\cC$ then comment
on why the result holds when $\abs{\cA_\cC(P)}\geq\gamma \abs\cC$.
By \cref{lemma:candidate-family} we have that for all $X\in\cA_\cC(P)$
it holds that $\bar\cD_\cC(X)\in\bcC$ and for all $X\in\cA_\cC^c(P)$
it holds that $\cD_\cC^c(X)\in\bcC$, thus the condition of the
theorem holds for the recursive calls and we can use induction
hypothesis after which it is left to check that
\begin{align}
\frac{c'}{\abs\cC}
&\sum_{X:X\in\cA_\cC(P)\wedge\abs{\cD_\cC(X)}>1}\log(\abs{\cD_\cC(X)}-1)
+\frac{c'}{\abs\cC}\sum_{X\in\cA_\cC^c(P)}\log(\abs\cC-\abs{\cD_\cC(X)})
+ 1\label{eq:log-condition-induction-first} \\
&\quad + \frac{c}{\abs\cC}\sum_{X\in\cA_\cC(P)}\log^k(n)
+ \frac{c}{\abs\cC}\sum_{X\in\cA_\cC^c(P)}\log^k(n)
\label{eq:log-condition-induction-second}
\end{align}
is bounded above by $c'\log\abs\cC + c\log^k(n)$.  First, note that
the \cref{eq:log-condition-induction-second} is bounded by $c\log^k(n)$
since $\cA_\cC(X)\cup\cA_\cC^c(X)=\cC$. Next, consider the
\cref{eq:log-condition-induction-first}. Note that for
$X\not\in\cH_\cC(\alpha)$ we can upper bound $\abs{\cD_\cC(X)}-1$
and $\abs\cC-\abs{\cD_\cC(X)}$ by $\abs\cC$.  At the same time, for
$X\in\cH_\cC(\alpha)$ we have
$\abs\cC-\abs{\cD_\cC(X)}\leq(1-\alpha)\abs\cC$.  Then our goal is
to show
\begin{align}
&\frac{c'(\abs\cC-\abs{\cH_\cC(\alpha)})}{\abs\cC}\log\abs\cC
+ \frac{c'\abs{\cH_\cC(\alpha)}}{\abs\cC}\log((1-\alpha)\abs\cC) + 1\\
&\qquad\leq c'\log\abs\cC
= \frac{c'(\abs\cC-\abs{\cH_\cC(\alpha)})}{\abs\cC}\log\abs\cC
+ \frac{c'\abs{\cH_\cC(\alpha)}}{\abs{\cC}}\log\abs{\cC},
\end{align}
rearranging we get
\begin{align}
\frac{\abs\cC}{c'\abs{\cH_\cC(\alpha)}}
\leq\log\frac{1}{1-\alpha}
=\log\pr{\frac{\alpha}{1-\alpha}+1}.
\end{align}
Notice that since for any $x> -1$ it holds that
$\frac{x}{1+x}\leq\log(x+1)$, it suffices to show
\begin{align}
\frac{\abs\cC}{c'\abs{\cH_\cC(\alpha)}}
\leq\alpha \iff \abs\cC\leq c'\abs{\cH_\cC(\alpha)}\alpha.
\end{align}
From our assumption on $\abs{\cH_\cC(\alpha)}$ it suffices to pick
$c'\geq\frac1{\alpha\beta}$. For the base case consider $\cC=\cbr{X,Y}$
then for $X\neq P$ we have that either there is a single edge
$P\rightarrow X$ or $P$ is disconnected with $X$ for the condition
of the theorem to hold. Then the recursion in \cref{eq:recursion}
could be rewritten as
\begin{equation}
T(\cbr{X,Y}) = \frac12 T(\cbr{Y}) + \frac12 T(\cbr{X}) + 1,
\end{equation}
from which it follows that
\begin{equation}
T(\cbr{X,Y})=2\leq c\log(2).
\end{equation}

For the case when $\abs{\cA_\cC(P)}\geq\gamma\abs\cC$ consider the
set $\cH'$ of size at least $\frac\gamma2\abs\cC$ of the ancestors of
$P$ which are closest to $P$ in the topologically sorted order in
the graph $\bcG_\cC$. Each node $X\in\cH'$ has at most $\abs\cC-\frac\gamma2
\abs\cC$ descendants.  The rest of the proof follows similar steps as
for the case when $\abs{\cH_\cC(\alpha)}\geq\beta\abs\cC$.
\end{proof}

\section{Fast Parent Discovery in Erd\H{o}s-R\'enyi Graphs}
\label{appendix:erdos-renyi}

In this subsection we show that Erd\H{o}s-R\'enyi graphs satisfy
the condition for fast discovery of the parent node for sufficiently
large  values of $p$. We first state the following theorem.

\begin{theorem}\label{thm:erdos-renyi}
Let $\bcG_{n,p}$ be Erd\H{o}s-R\'enyi random DAG with probability of
each edge between $n$ nodes being equal to $p$. Assume
$p\geq1-\pr{\frac{1-c}{n-1}}^{1/(n-1)}$ for some constant $c\in[0,1]$,
and denote by $X, Y$ the first and last nodes in the topological
order, respectively.  It holds that
\begin{align}
\E\abs{\cD(X)}&\geq c n,\text{ and}\\
\E\abs{\cA(Y)}&\geq c n.
\end{align}
\end{theorem}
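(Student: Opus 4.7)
The plan is to reduce the two claims to one via symmetry, produce a pointwise lower bound on the probability that each vertex lies in $\cD(X)$, and finally sum and match to the hypothesis.

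First, I would observe that reversing the random permutation $\pi$ (i.e.\ applying $i\mapsto n+1-i$ to topological positions) is a measure-preserving transformation of $\bcG_{n,p}$ that reverses every edge and swaps $X$ with $Y$. Under this transformation, descendants of $X$ become ancestors of $Y$, so $\E\abs{\cA(Y)}=\E\abs{\cD(X)}$ and it suffices to prove $\E\abs{\cD(X)}\geq cn$. I would then relabel the vertices in topological order as $v_1=X,v_2,\dots,v_n=Y$, making each edge $v_i\to v_j$ for $i<j$ independently present with probability $p$, and write $\E\abs{\cD(X)}=\sum_{k=1}^n\prob\cbr{v_k\in\cD(v_1)}$ by linearity.

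For the pointwise bound, for each $k\geq 2$ I would consider the $k-1$ events $A_0=\cbr{v_1\to v_k}$ and $A_j=\cbr{v_1\to v_j\to v_k}$ for $j\in\cbr{2,\dots,k-1}$. These events use pairwise disjoint edge sets---distinct length-two paths go through distinct intermediate vertices---so they are mutually independent, and the occurrence of any $A_j$ forces $v_k\in\cD(v_1)$. Taking the complement of their union,
\[\prob\cbr{v_k\in\cD(v_1)}\;\geq\;1-(1-p)(1-p^2)^{k-2},\]
and summing the geometric series yields $\E\abs{\cD(X)}\geq n-(1-p)\bigl(1-(1-p^2)^{n-1}\bigr)/p^2$.

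The main obstacle will be the final algebraic step: verifying that this lower bound is at least $cn$ under the hypothesis, which I would rewrite equivalently as $(n-1)(1-p)^{n-1}\leq 1-c$. The mismatch is that the hypothesis controls $(1-p)^{n-1}$, whereas my derived bound contains $(1-p^2)^{n-1}=(1-p)^{n-1}(1+p)^{n-1}$ and a prefactor $(1-p)/p^2$. I would split into cases depending on the size of $p$: for small $p$ I would use $1-(1-p^2)^{n-1}\leq(n-1)p^2$ to cancel the $p^2$ in the denominator and reduce to bounding $(n-1)(1-p)$, while for $p$ close to $1$ the numerator factor $(1-p)$ is itself small. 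If the length-two bound proves too loose in an intermediate regime, I would fall back on the alternative factorized lower bound $\prob\cbr{v_k\in\cD(v_1)}\geq\prod_{j=2}^k(1-(1-p)^{j-1})$, which follows from the sufficient condition that every $v_j$ with $2\leq j\leq k$ has at least one incoming edge from $\cbr{v_1,\dots,v_{j-1}}$---this forces a backwards-traced parent chain from $v_k$ to terminate at the unique source $v_1$---and which couples more directly to the quantity $(1-p)^{n-1}$ that appears in the hypothesis.
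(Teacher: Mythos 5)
Your structural reductions are sound: the reversal symmetry giving $\E\abs{\cA(Y)}=\E\abs{\cD(X)}$, the linearity decomposition $\E\abs{\cD(X)}=\sum_k\prob\cbr{v_k\in\cD(v_1)}$, the independence of the length-one and length-two path events (disjoint edge sets), and the backward-parent-chain argument behind the fallback bound are all correct. For comparison, the paper proceeds quite differently: it peels off the last vertex in topological order, tracks the full distribution $p_{n,i}=\prob\cbr{\abs{\cD(X)}=i}$ through the recursion $p_{n,i}=(1-(1-p)^{i-1})p_{n-1,i-1}+(1-p)^ip_{n-1,i}$, and reduces everything to the estimate $\E[(1-p)^{\abs{\cD(X)}}]\leq(n-1)(1-p)^{n-1}$, which is exactly where the hypothesis enters; your per-vertex bounds never produce the quantity $(n-1)(1-p)^{n-1}$.

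The genuine gap is precisely the step you flag as ``the main obstacle,'' and it cannot be closed along the lines you propose. The hypothesis $(n-1)(1-p)^{n-1}\leq 1-c$ admits $p$ as small as roughly $\frac{\log n}{n}$, and in that regime both of your pointwise bounds collapse. The length-$\leq2$ bound gives $\E\abs{\cD(X)}\geq n-(1-p)\bigl(1-(1-p^2)^{n-1}\bigr)/p^2$, which reaches $cn$ only when $p\geq((1-c)n)^{-1/2}$; your proposed ``small $p$'' case reduces to $(n-1)(1-p)\leq(1-c)n$, i.e.\ $p\geq c-o(1)$, a \emph{constant} lower bound, so it does not handle small $p$ at all. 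The fallback is weaker still: its first factor is $1-(1-p)=p$, so every summand with $k\geq2$ is at most $p$ and the whole sum is at most $1+(n-1)p=\cO(\log n)$ when $p=\Theta(\log n/n)$. Worse, no pointwise lower bound can succeed there: Markov's inequality applied to the number of increasing paths from $v_1$ to $v_k$ gives $\prob\cbr{v_k\in\cD(v_1)}\leq p(1+p)^{k-2}$, and at the threshold value of $p$ (where $(n-1)\log\frac{1}{1-p}=\log\frac{n-1}{1-c}$) this is $o(1)$ uniformly over $k\leq(1-\delta)n$, so that $\E\abs{\cD(X)}\leq\delta n+o(n)$ for every fixed $\delta>0$ --- i.e.\ the inequality you are trying to prove actually fails at the sparse end of the stated hypothesis. (The paper's own induction silently needs the hypothesis to hold with $n-1,n-2,\dots$ in place of $n$ for the same $p$, which the stated condition does not imply in this regime, so the difficulty is intrinsic to the statement rather than to your decomposition.) Your argument is salvageable only under a substantially stronger assumption, e.g.\ $p=\Omega(n^{-1/2})$ for the length-two bound or $p$ bounded away from $\frac12$ for the product bound.
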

\begin{proof}
In the proof we show by induction that the expected number of
descendants of the root node (the first node in topological
order) is lower bounded by $c n$. The expected number of the ancestors
could be lower bounded using the same reasoning.  Denote by $p_{n,i}$
the probability that there are exactly $i$ descendants of the root
node in the graph $\bcG_{n,p}$ and note that
\begin{equation}
\E\abs{\cD(X)}=\sum_{i=1}^n i p_{n,i}.
\end{equation}
Furthermore, $p_{n,i}$ satisfies the following recursion:
\begin{align}
p_{n,i}&=(1-(1-p)^{i-1})p_{n-1,i-1} + (1-p)^i p_{n-1,i} \\
&=p_{n-1,i-1} + (1-p)^{i-1}((1-p)p_{n-1,i} - p_{n-1,i-1}),
\end{align}
with $p_{1,1} = 1$ and $p_{n,i}=0$ if $i>n$ or $i=0$. Thus, we can
write
\begin{align}
\E\abs{\cD(X)}&=\sum_{i=1}^n i p_{n-1,i-1} +
\sum_{i=1}^n i (1-p)^{i-1}((1-p) p_{n-1,i} - p_{n-1,i-1})\\
&=\sum_{i=1}^n i p_{n-1,i-1}
- \sum_{i=1}^n (1-p)^{i-1} p_{n-1,i-1} \\
&\quad+ \sum_{i=1}^n \br{i(1-p)^{i} p_{n-1,i}
-(i-1)(1-p)^{i-1}p_{n-1,i-1}}.
\end{align}
Note that
\begin{align}
\sum_{i=1}^n\br{i(1-p)^i p_{n-1,i} - (i-1)(1-p)^{i-1} p_{n-1,i-1}} = 0
\end{align}
as a telescoping sum and by induction hypothesis we have that
\begin{align}
\sum_{i=1}^n i p_{n-1,i-1}=\sum_{i=1}^n (i-1) p_{n-1,i-1}+\sum_{i=1}^n p_{n-1,i-1}\geq c(n-1)+1,
\end{align}
therefore it is left to prove
\begin{align}
\sum_{i=1}^n(1-p)^{i-1} p_{n-1,i-1} \leq 1-c.
\end{align}

To prove this we first show that $p_{n,i}\leq(1-p)^{n-i}$, again
by induction. This holds for $n=1$ and all $i$ or $i=0$ and all
$n>1$. Furthermore by induction hypothesis we have
\begin{align}
p_{n,i}&\leq (1-(1-p)^{i-1})(1-p)^{n-i} + (1-p)^i(1-p)^{n-1-i} \\
&=(1-p)^{n-i} + (1-p)^{n-1} + (1-p)^{n-1} = (1-p)^{n-i}.
\end{align}
Using this result together with the fact that $p_{n-1,0}=0$ we have
\begin{align}
\sum_{i=1}^n(1-p)^{i-1} p_{n-1,i-1}\leq (n-1)(1-p)^{n-1} \leq 1-c,
\end{align}
where the last inequality follows from the assumption of the theorem.
To finish the proof, note that for $n=1$ we have $\E\abs{\cD(X)}=1\geq c$.
\end{proof}

\erdosrenyi*
\begin{proof}
From our lower bound on $p$ and \cref{thm:erdos-renyi} it follows
that for all subgraphs of size at least $\log^k n$ the first and
last nodes in the topological order have at least $c n$ descendants
and ancestors respectively. Let $\cC$ be an arbitrary candidate set
of size larger than $4\log^k n$ considered by \cref{alg:thealg}
when run on the graph $\bcG_{n,p}$.  Let $j\in[m]$ with $m=\abs{\cC}$
be the index of $P$ in the topologically sorted order in the subgraph
of $\bcG_{m,p}$ over nodes in $\cC$. We will comment bellow on the
situation when $P\not\in\cC$. We consider two cases. First, if
$j\leq m/2$, then consider $m/4$ subgraphs each consisting of
$m-m/2-i$ last nodes for $i\in[{m/4}]$ of the original graph
$\bcG_n$. The size of each of these subgraph is at least $\log^k
n$ and by \cref{thm:erdos-renyi} we have that each node at index
${m/2}+i-1$ in the topological order of the original graph $\bcG_m$
has at least $c{m/4}$ descendants. Since there are ${m/4}$ such
nodes, the first condition of \cref{thm:log-condition} is
satisfied.  The same happens for the cases when $P\not\in\cC$ because
in that case all the nodes in $\cC$ are non-ancestors of $P$ since
for $P\not\in\cC$ it must be the case that the algorithm intervened
on $P$ at some point before considering $\cC$.  Second, if $j>{m/2}$,
then by \cref{thm:erdos-renyi} we have that the number of
ancestors of $P$ is at least $c{n/2}$ which means that the second
condition of \cref{thm:log-condition} is satisfied.
\end{proof}

\begin{remark}\label{remark:erdos-renyi}
Note that since $(1-1/n)^n\leq e^{-1}$ we have $\log(n/c)\log\pr{1-1/n}
\leq -\frac{\log(n/c)}{n}$ and thus
$\pr{1-1/n}^{\log(n/c)}\leq\pr{\frac{c}{n}}^{1/n}$. Using this
together with the Bernoulli inequality $(1+x)^r\geq 1+rx$ for
$x\geq-1$ and $r\geq1$ we get that assuming $\log^kn\geq1+\max(1,(1-c)e)$
the condition of \cref{corollary:erdos-renyi} is satisfied for
$p\geq\frac{\log\pr{\log^k(n)-1}-\log(1-c)}{\log^k(n) - 1}$.
Additionally, by using L'H\'opital's rule and the fact that
$\lim_{n\to\infty}\pr{1/n}^{1/n}=1$, we get that the two lower bounds for $p$ presented in \cref{corollary:erdos-renyi} and here are asymptotically equivalent.
\end{remark}

\section{Sublinear Upper Bound}\label{appendix:sublinear}

\sublinear*
\begin{proof}
Let $A_X=\cbr{\text{the algorithm intervenes on the node $X$}}$, then
\begin{equation}
\E[N(\bcG,P)]=\E[\sum_{X\in \cV}\I\cbr{A_X}]=\sum_{X\in\cV} \prob(A_X).
\end{equation}
We split the sum above into three parts.  First, consider the nodes
$X$ that are at distance at most $m$ from the parent node for some
$m$ to be specified later. There are at most $d^{m+1}$ such nodes
and for each of them we bound the probability $\prob(A_X)$ by 1.
Similarly, for all nodes $X$ such that there is no collider-free
path between $P$ and $X$ we also bound the probability $\prob(A_X)$
by 1.  Each of the leftover nodes has a collider-free path of length
at least $m$ to $P$. We will show that this means that the probability
$\prob(A_X)\leq2/m$.  Define
\begin{align}
B_X=\cbr{\text{the algorithm intervenes on the node $P$ before
intervening on the node $X$}},
\end{align}
then using the law of total probability we can write
\begin{align}
\prob(A_X) = \prob(A_X|B_X)\prob(B_X)
+ \prob(A_X|B_X^c)\prob(B_X^c),
\end{align}
where $B_X^c$ stands for the complement of the event $B_X$, i.e.
\begin{align}
B_X^c=\cbr{\text{the algorithm intervenes on node $P$
after intervening on the node $X$}}.
\end{align}
Consider the probability $\prob(A_X|B_X)$. For this probability
not to be zero it must be the case that the node $X$ is a descendant
of the parent node $P$. Therefore, there must be a directed path
of length at least $m$ from $P$ to $X$. Note that any node on this
path except for the node $P$ cannot be intervened on before the node $X$
is intervened on. Therefore, by the time the algorithm intervenes
on the node $X$ there are at least $m$ nodes in the set of
candidate nodes from which it has to sample the node $X$ and hence
\begin{align}
\prob(A_X|B_X)\leq\frac{1}{m}.
\end{align}
Next, consider the probability $\prob(B_X^c)$. If there is a directed
path from $P$ to $X$, then no node on this path could have been
intervened on before the round at which the algorithm intervenes
on $X$ since otherwise $X$ would have been removed from the candidate
set.  Similarly, if there is a directed path from $X$ to $P$, then
intervening on any node on the path means excluding $X$ from the
candidate set. Thus, in these two cases $\prob(B_X^c)\leq1/(m+1)$
as there are at least $m+1$ nodes on the path of length at least
$m$. Lastly, consider the case when there are no directed paths
between $P$ and $X$. Since for this $X$ there exists a collider-free
path, there must be a path containing \emph{exactly one} ancestor
of both $X$ and $P$. Intervening on any node on this path other
than the node which is an ancestor of both $X$ and $P$ means excluding
$X$ from the candidate set because the intervened on node would be
an ancestor of $X$ or $P$ but not both. Thus, there are at least
$m$ nodes in the candidate set by the time the algorithm intervenes
on $X$ and therefore
\begin{align}
\prob(B_X^c)\leq\frac1{m}.
\end{align}
Bounding the other probabilities by one gives $\prob(A_X)\leq2/m$.
Bounding the number of nodes in the third group by $n$ and combining
all of the above we get
\begin{align}
\E[N(\bcG,P)]=\cO\pr{d^{m+1} + \frac{n}{m} + \frac{n}{\log_d(n)}}.
\end{align}
Finally, setting $m=\log_d\pr{\frac{n}{\log_d(n)}}-1$ finishes the proof.
\end{proof}

\section{Regret Bounds}\label{appendix:regret}
\subsection{Cumulative Regret}

\begin{lemma}\label{lemma:batch-size}
Let
\begin{align*}
A_{X,\bZ}&=\cbr{\exists x\in[K],\bz\in[K]^{\abs\bZ} :
\abs{\bar{R}-\bar{R}^{\doop(X=x,\bZ=\bz)}}>\Delta/2},\\
D_{X,Y,\bZ}&=\Bigl\{\exists
  x,y\in[K],\bz\in[K]^{\abs\bZ}
  : \abs{\hat{P}(Y=y)-\hat{P}(Y=y|\doop(X=x,\bZ=\bz))}
>\eps/2\Bigr\}
\end{align*}
for any $X,Y\in\cV$, $\bZ\subseteq\cP$ with nodes being the last in some
topological order of $\cP$ and $A_{X,\bZ}^c, D_{X,Y,\bZ}^c$ be their
compliments.  Define $E$ as the event that for every node we correctly
determine its descendants and whether it is an ancestor of $P$ using the
criteria described in \cref{sec:determine-ancestor}, i.e.
\begin{align*}
E=\bigcap_{\bZ}\bigcap_{X\in\cA(P)}
  A_{X,\bZ} \cap\bigcap_{X\in\cA^c(P)} A_{X,\bZ}^c
\cap\bigcap_{X\in\cV}\Bigl(\bigcap_{Y\in\bar\cD(X)}D_{X,Y,\bZ}
\cap\bigcap_{Y\in\cD^c(X)}D_{X,Y,\bZ}^c\Bigr),
\end{align*}
where the intersection with respect to $\bZ$ is over all possible sequences of
last elements of $\cP$ in all topological orders. Then it holds that
$\prob\cbr{E}\geq1-\delta$ if
$B=\max\cbr{\frac{32}{\Delta^2}\log\pr{\frac{8nK(K+1)^n}{\delta}},
\frac{8}{\eps^2}\log\pr{\frac{8n^2K^2(K+1)^n}{\delta}}}$.
\end{lemma}
\begin{proof}
By Hoeffding's inequality for bounded random variables for
fixed $X,Y\in\cV$ with $X\in\cA(Y)$, $\bZ\subseteq\cP$,
$x,y\in[K]$ and $\bz\in[K]^{\abs\bZ}$ it holds that
\begin{align}
&\abs{\hat{P}(Y=y|\doop(X=x,\bZ=\bz))-\prob\cbr{Y=y|\doop(X=x,\bZ=\bz)}}\geq\\
&\quad\geq\sqrt{\frac1{2B}\log\pr{\frac{8n^2K^2(K+1)^n}{\delta}}}
\end{align}
with probability at most $\frac{\delta}{4n^2K^2(K+1)^n}$ and
\begin{align}
\abs{\hat{P}(Y=y|\doop(\bZ=\bz))-\prob\cbr{Y=y|\doop(\bZ=\bz)}}
\geq\sqrt{\frac1{2B}\log\pr{\frac8\delta}}
\end{align}
with probability at most $\frac{\delta}{4nK(K+1)^n}$. Additionally, by
Hoeffding's inequality for $1$-subgaussian random variables
we have that for fixed $X\in\cV$ and $x\in[K]$ it holds that
\begin{align}
\abs{\E[R|\doop(X=x,\bZ=\bz)]-\bar{R}^{\doop(X=x,\bZ=\bz)}}
\geq\sqrt{\frac{2}{B}\log\pr{\frac{8nK(K+1)^n}{\delta}}}
\end{align}
with probability at most $\frac{\delta}{4nK(K+1)^n}$. Moreover,
\begin{align}
\abs{\bar{R}^{\doop(\bZ=\bz)}-\E[R|\doop(\bZ=\bz)]}
\geq\sqrt{\frac{2}{B}\log\pr{\frac{8(K+1)^n}{\delta}}}
\end{align}
with probability at most $\frac{\delta}{4(K+1)^n}$. Consider the event which is
the union of the above bad events. Since for $\bZ$ there are
$\sum_{\ell=0}^{\abs\cP}\binom{\abs\cP}{\ell}K^\ell=(K+1)^{\abs{\cP}}$ choices,
by union bound we have that the probability of this bad event is at most
$\delta$. Note that under the complement of this bad event for $X\not\in\cA(P)$
and all $x\in[K],\bz\in[K]^{\abs\bZ}$ by \cref{assumption:reward-gap} and the
choice of $B$ as in the statement of \cref{lemma:batch-size} we have
\begin{align}
\abs{\bar{R}^{\doop(\bZ=\bz)}-\bar{R}^{\doop(X=x,\bZ=\bz)}}&\leq
\abs{\bar{R}^{\doop(\bZ=\bz)}-\E[R|\doop(\bZ=\bz)]} \\
&\quad+\abs{\bar{R}^{\doop(X=x,\bZ=\bz)}-\E[R|\doop(X=x,\bZ=\bz)]} \\
&\leq\Delta/2,
\end{align}
and for some $x\in[K],\bz\in[K]^{\abs\bZ}$
\begin{align}
\abs{\bar{R}^{\doop(\bZ=\bz)}-\bar{R}^{\doop(X=x,\bZ=\bz)}}
&\geq\abs{\E[R|\doop(\bZ=\bz)] -\E[R|\doop(X=x,\bZ=\bz)]} \\
&\quad-\abs{\bar{R}^{\doop(\bZ=\bz)}-\E[R|\doop(\bZ=\bz)]} \\
&\quad-\abs{\E[R|\doop(X=x,\bZ=\bz) -\bar{R}^{\doop(X=x,\bZ=\bz)}]} \\
&>\Delta/2,
\end{align}
Similarly, under the complement of the same event we get that for
$Y\not\in\cD(X)$ and all $x,y\in[K],\bz\in[K]^{\abs\bZ}$ it holds that
\begin{align}
&\abs{\hat{P}(Y=y|\doop(\bZ=\bz))-\hat{P}(Y=y|\doop(X=x,\bZ=\bz)}\leq \\
&\quad\leq\abs{\hat{P}(Y=y|\doop(\bZ=\bz))-\prob\cbr{Y=y|\doop(\bZ=\bz)}} \\
&\quad\quad+\abs{\hat{P}(Y=y|\doop(X=x,\bZ=\bz))-\prob\cbr{Y=y|\doop(X=x,\bZ=\bz)}}\\
&\quad\leq\eps/2,
\end{align}
and if $Y\in\cD(X)$, then for some $x,y\in[K],\bz\in[K]^{\abs\bZ}$
\begin{align}
&\abs{\hat{P}(Y=y|\doop(\bZ=\bz))-\hat{P}(Y=y|\doop(X=x,\bZ=\bz)} \geq \\
&\quad\geq\abs{\prob\cbr{Y=y|\doop(X=x,\bZ=\bz)}
  -\prob\cbr{Y=y|\doop(\bZ=\bz)}}\\
&\quad\quad-\abs{\prob\cbr{Y=y|\doop(X=x,\bZ=\bz)}
  -\hat{P}(Y=y|\doop(X=x,\bZ=\bz))}\\
&\quad\quad-\abs{\hat{P}(Y=y|\doop(\bZ=\bz))
  -\prob\cbr{Y=y|\doop(\bZ=\bz)}}>\eps/2.
\end{align}

\end{proof}

\regret*
\begin{proof}
By lemma \cref{lemma:batch-size} it holds that for every node we can correctly
identify whether that node is an ancestor of $P$ and all the descendants of
that node with probability at least $1-\delta$ using the criteria described in
\cref{sec:determine-ancestor}.  That means that \cref{alg:thealg} will
correctly discover the parent node under the same good event in
$B\E[N(\bcG,P)]\sum_{\ell=1}^{\abs\cP+1}K^\ell\preceq
B\E[N(\bcG,P)]K^{\abs\cP+1}$ interventions since to discover each new parent we
need to perform interventions over all possible values of the previously
discovered parents and all the remaining candidate nodes.  Subsequently running
a standard bandit algorithm such as \acs{ucb} to find an optimal intervention
on $P$ leads to regret bound of $\sum_{\bx\in[K]^{\abs\cP}}\Delta_{\cP=\bx}\pr{
1+\frac{\log{T}}{\Delta_{\cP=\bx}^2}}$ \cite{lattimore2020bandit}.
\end{proof}

\subsection{Simple Regret}\label{sec:simple-regret}

In this subsection we provide an upper bound on simple regret. First,
similar to how it is done in the main text, we define conditional simple regret:
\begin{align}
r_\cL^T(\bcG,\cP|E)
=\max_{\bX\subseteq\cV}\max_{\bx\in[K]^{\abs{\bX}}}\E[R|\doop(\bX=\bx)]
-\E[R|\doop(\bX_{T+1}=\bx_{T+1}),E],
\end{align}
where $E$ is the event that all descendants of any node in $\cV$ are correctly
identified together with whether any node is an ancestor of $\cP$, defined in
\cref{lemma:batch-size}.
%Let the
Suppose that the learner runs a standard bandit algorithm that is designed to
minimize cumulative regret, for example, \acs{ucb}, from round $N(\bcG,\cP)+1$
to round $T$. After that the final intervention $\doop(X=x)$ for arbitrary
$x\in[K]$ and $X\in\cV$ is sampled with probability
\begin{align}
\frac1{T}\sum_{t=1}^{T}\I\cbr{I_t=\doop(X=x)},
\end{align}
where $I_t$ is the intervention performed in round $t$. Standard conversion of
cumulative regret bound to simple regret bound \citep[see e.g.,][Proposition
33.2]{lattimore2020bandit} leads to \cref{corollary:simple-regret} stated
below.
\begin{corollary}\label{corollary:simple-regret}
For the learner described above we can bound conditional simple regret
as follows:
\begin{align}
r_{\cL}^T(\bcG,\cP|E)
&\preceq\frac{1}{T}\max\cbr{\frac1{\Delta^2},\frac1{\eps^2}}
K^{\abs\cP+2}\E[N(\bcG,\cP)]\log\pr{\frac{nK^n}{\delta}} \\
&\quad+\frac1T\sum_{
  \bx\in[K]^{\abs\cP}}\Delta_{\cP=\bx}\pr{
    1+\frac{\log{T}}{\Delta_{\cP=\bx}^2}}.
\end{align}
\end{corollary}

\section{Generalization to Multiple Parent Nodes}\label{appendix:multiparent}

\begin{algorithm}[tb]
\caption{Full version of \ac{raps} for unknown number of parents}
\label{alg:full}
\begin{algorithmic}[1]
\Require{Set of nodes $\cV$ of $\bcG$, $\eps,\Delta$, probability
of incorrect parent set estimate $\delta$}
\Ensure{Estimated set of parent nodes $\hat\cP$}
\State $\hat\cP\gets\emptyset, \cS\gets\emptyset,
\hat{P}\gets\varnothing, \cC\gets\cV, \cD'\gets\emptyset$
\Comment{$\cD'$ is the set of descendants of last ancestor of $R$}
\State $B\gets\max\cbr{\frac{32}{\Delta^2}\log\pr{\frac{8nK(K+1)^n}{\delta}},
\frac8{\eps^2}\log\pr{\frac{8n^2K^2(K+1)^n}{\delta}}}$
\State Observe $B$ samples from \ac{pcm} and compute $\bar{R}$
and $\hat{P}(X)$ for all $X\in\cV$
\While{$\cC\neq\emptyset$}
\State $X\sim\Unif(\cC)$
\For{$x\in[K]$ and $\bz\in[K]^{\abs{\hat\cP}}$}
\State Perform $B$ interventions $\doop(X=x,\hat\cP=\bz)$
\State Compute $\bar{R}^{\doop(X=x,\hat\cP=\bz)},
\hat{P}(Y|\doop(X=x,\hat\cP=\bz))$ for all $Y\in\cV$
\EndFor
\State Estimate descendants of $X$:
\begin{align*}
\cD\gets\biggl\{Y\in\cV &\mid \exists x\in[K],\bz\in[K]^{\abs{\hat\cP}}, \\
&\text{ such that }\abs{\hat{P}(Y|\doop(\hat\cP=\bz))
  -\hat{P}(Y|\doop(X=x,\hat\cP=\bz))}>\eps/2\biggr\}
\end{align*}
\If{$\exists x\in[K],\bz\in[K]^{\abs{\hat\cP}}$ such that
$\abs{\bar{R}^{\doop(\hat\cP=\bz)}-\bar{R}^{\doop(X=x,\hat\cP=\bz)}}
>\Delta/2$}
\State $\cC\gets\cD\setminus\cbr{X}$
\State $\hat{P}\gets X, \cD'\gets\cD$
\Else
\State $\cC\gets\cC\setminus\cD$
\State $\cS\gets\cS\cup\cD$
\EndIf
\If {$\cC=\emptyset$ and $\hat{P}\neq\varnothing$}
\State $\hat{\cP}\gets\hat\cP\cup\cbr{\hat{P}}$
\State $\cS\gets\cS\cup\cD'$
\State $\cC\gets\cV\setminus\cS$
\State $\hat{P}\gets\varnothing$
\EndIf
\EndWhile
\State\Return $\hat\cP$
\end{algorithmic}
\end{algorithm}

\multiparent*
\begin{proof}
As noted in the main text, \cref{alg:multiparent} discovers the parent nodes in
a reverse topological order, let $\tau\in\btau(\cP)$ be such an order and
$i=\abs{\hat{\cP}}\leq\abs{\cP}$ be a number of the iteration of the while loop
in \cref{alg:multiparent}. First, assume $i<\abs\cP$. We argue that the
expected number of interventions during a call of \cref{alg:thealg} is the same
as the expected number of interventions done by \cref{alg:thealg} when there is
only one parent node $P$ which is equal to the $(\abs\cP-i)$-th element of
$\tau$ and $\hat{\cP}=\emptyset$ on the graph
$\bcG_{\cV\setminus\cS(\tau,\abs\cP-i)}$. If $i=\abs\cP$, then we need to show
that the call to \cref{alg:thealg} with $\hat\cP=\cP$ on the graph
$\bcG_{\cV\setminus\cD(\cP)}$ is the same as running \cref{alg:thealg} on the
graph-parent-node pair $(\bcG_{\cV\setminus\cD(\cP)},\varnothing)$ with
$\hat\cP=\emptyset$. Proving these results leads to the proof of the result of
the theorem since by the assumption of the theorem we have that
$(\bcG_{\cV\setminus\cS(\tau,\abs\cP-i)}, P)$ and
$(\bcG_{\cV\setminus\cD(\cP)},\varnothing)$ satisfy the assumptions of either
\cref{thm:log-condition} or \cref{thm:sublinear}.  Let $X$ be an arbitrary
node, intervened on during the call of \cref{alg:thealg} by
\cref{alg:multiparent}. If $X$ is an ancestor of $P$ in $\bcG_{\cC}$ for some
$\cC$, then $X$ is also an ancestor of $P$ in
$\bcG_{\cV\setminus\cS(\tau,\abs\cP-i)\cap\cC}$ since no ancestor of $P$ is
contained in $\cS(\tau,\abs\cP-i)$ because of its definition. Then in there
will be a recursive call for the candidate set $\bar\cD_{\cC}(X)$.  At the same
time, if $X$ is not an ancestor of any node in $\hat\cP$ in $\bcG_{\cC}$ then
$X$ is not an ancestor of any such node in
$\bcG_{\cV\setminus\cS(\abs\cP-i)\cap\cC}$ since it is a subgraph of the graph
$\bcG_{\cC}$, and therefore there will be a recursive call for the candidate
set $\cD_{\cC}^c(X)$.  Finally, if $X$ is not an ancestor of $P$ in
$\bcG_{\cC}$ but there exist some $P'\in\cP$ such that $P'\neq P$ and $X$ is an
ancestor of $P'$ in $\bcG_{\cC}$, then the call to \cref{alg:thealg} by
\cref{alg:multiparent} will return $P'$ which is a contradiction. Thus, by
induction on the elements of $\cP$ we have that the sequences of candidate sets
$\cC$ with which the recursive function of \cref{alg:thealg} is called when
this algorithm is called by \cref{alg:multiparent} and  the sequences of of
candidate sets $\cC$ with which the recursive function of \cref{alg:thealg} is
called when this algorithm is executed on
$\bcG_{\cV\setminus\cS(\tau,\abs\cP-i)}$ are equally likely and we conclude
that the expected number of interventions in these two cases is the same.
\end{proof}

\ermultiparent*
\begin{proof}
The minimum $p$ in the condition of \cref{corollary:erdos-renyi}
grows with decreasing $n$. Therefore, for the condition of
\cref{thm:multiparent} it suffices that for the smallest subgraph
$\bcG_{\cV\setminus\cD(\cP)}$ considered by \cref{alg:multiparent}
the condition of \cref{corollary:erdos-renyi} holds. However, this
graph needs to be of size at least $c_1\log^k(n)$ since otherwise
the bound is trivial. Plugging $n'=c_1\log^k(n)$ as $n$ in the
condition of \cref{corollary:erdos-renyi} leads to the desired
result.
\end{proof}

\section{Universal Lower Bound}\label{appendix:lower-bound}

% \section{Universal Lower Bound}\label{sec:lower-bound}

In this section we show that the result of \cref{sec:exact}
is tight in the sense that any algorithm that finds the parent
node $P$ (or determines it does not exist in the graph $\bcG$)
requires at least the number of interventions performed by \acs{raps}.

\begin{restatable}{theorem}{lowerbound}\label{thm:lower-bound}
Fix a causal graph $\bcG$ and a parent node $P$. Any learner $\cL$ that
correctly identifies the parent node $P$, for any graph obtained from $\bcG$ by
relabeling\footnote{Relableing corresponds to the assignment of indices 1
through $n$ identifying each node, but not the assignment of random variables
to nodes.}of the nodes and having $P$ take one of $n$ vertices or
$P=\varnothing$, satisfies
\begin{equation*}
\E[N_{\cL}(\bcG,P)]\geq
\sum_{X\in\cV}\frac1{
\abs{\cA_{\bcG}(P)\triangle\cA_{\bcG}(X)\setminus\cbr{X}}+1},
\end{equation*}
where the expectation is taken with respect to the random assignment of the
indices 1 through $n$ identifying each node and the randomness in running
\acs{raps}.
\end{restatable}

The proof could be found below. Consider, for example, a null graph
$\bcG=(\cV,\emptyset)$. The number of ancestors of every vertex is equal to one
and the expression in \cref{thm:lower-bound} becomes $\Omega(n)$. At the same
time, even if the graph is connected and its skeleton is a line graph it is
possible to have a lower bound of $\Omega(n)$ by having all vertices separated
from $P$ by colliders as in \cref{fig:colliders}.  In this figure, every node
$X_i$ where $1\leq i<n-1$ is odd is a collider on the path between $X_{i-1}$
and $X_{i+1}$ and assume that $X_0\equiv P$. The number of ancestors of every
node $X_j$, where $1<j<n-1$ is even, equals one leading to $\Omega(n)$ lower
bound. If a graph is connected and has no colliders, \cref{thm:sublinear}
results in $\cO\pr{\frac{n}{\log_d(n)}}$ upper bound on the number of
interventions. The upper bound in \cref{thm:sublinear} is tight for perfect
$d$-ary trees. In such trees the number of non-common ancestors between $P$
(possibly $P=\varnothing$) and any node $X$ is lower bounded by the distance
from $X$ to the root assuming that $X$ comes from one of the subtrees, other
than the subtree containing $P$. Thus, considering only the last term in the
summation results in
\begin{align*}
\E[N_{\cL}(\bcG,P)]&\geq
\sum_{h=1}^{\log_d(n+1)}\frac{(d-1)d^{h-1}}{h+1}
\geq\frac{(n+1)(d-1)}{d(\log_d(n+1)+1)},
\end{align*}
which matches the asymptotic upper bound of \cref{thm:sublinear}.
By adding an extra knowledge about the essential graph, the algorithm in \citet{greenewald2019sample} can detect the parent node with at most  $\cO(\log n)$ number of atomic interventions.

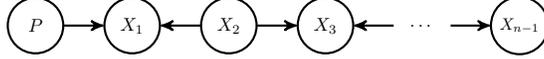
\begin{figure}
\centering
\begin{tikzpicture}[->,every node/.style=node,node distance=1.5em]
\node (p) {$P$};
\node[right=of p] (x1) {$X_1$};
\node[right=of x1] (x2) {$X_2$};
\node[right=of x2] (x3) {$X_3$};
\node[right=of x3,draw=none] (dots) {$\dots$};
\node[right=of dots,scale=0.9] (xnm1) {$X_{n-1}$};
\path
(p) edge (x1)
(x2) edge (x1)
(x2) edge (x3)
(dots) edge (x3)
(dots) edge (xnm1);
\end{tikzpicture}
\caption{An example of a graph with a skeleton that is a line graph
and $n/2-1$ colliders ($n$ is assumed to be even). Our lower bound in \cref{thm:lower-bound} implies that any learner requires $\Omega(n)$
atomic interventions to discover $P$ in this graph.}\label{fig:colliders}
\end{figure}

\begin{proof}[Proof of \cref{thm:lower-bound}]
The proof uses Yao's principle \citep{yao1977probabilistic} from
which it follows that we need to show that the best deterministic
algorithm performs at least the number of interventions in the lower
bound of the theorem against some distribution over graphs $\bcG$.
Thus, in what follows, let $\prob$ be the probability measure over
the random graphs obtained by randomly and uniformly relabeling the
nodes of the graph $\bcG$, such a random graph will be denoted by
$\bcH$.  Assume that the learner $\cL$ does not intervene on the
same node twice. Moreover, assume that if it intervenes on a
non-ancestor of $P$ it will not subsequently intervene on any of
its' descendants and that if it intervenes on an ancestor of $P$
it will not subsequently intervene on any of the non-descendants
of that ancestor.  We can make this assumption as any learner which
does not satisfy it would intervene on the same nodes with the same
results as a new learner which avoids making these redundant
interventions. We denote by $\bcD$ the set of all learners that
satisfy this assumption. Our goal is to lower bound
\begin{align}
\inf_{\cL\in\bcD}\E_{\bcH}[N(\bcH,\cL)]
=\inf_{\cL\in\bcD}\E\br{\sum_{X\in\cV}\I\cbr{A_X}}
=\inf_{\cL\in\bcD}\sum_{X\in\cV}\prob\cbr{A_X},
\end{align}
where $A_X=\cbr{\text{the learner $\cL$ intervenes on the node $X$}}$.
Let $Z$ be a node selected uniformly at random and independently
from the sampling of the graph and the parent node, then
\begin{align}
\sum_{X\in\cV}\prob\cbr{A_X}
=n\sum_{X\in\cV}\prob\cbr{Z=X}\prob\cbr{A_X}
=n\prob\cbr{A},
\end{align}
where $A=\cbr{\text{the learner $\cL$ intervenes on a randomly
selected node $Z$}}$. Note that for learner $\cL$ there are only
two ways not to intervene on any node $Z$. The first is to intervene
either on an ancestor of $Z$ which is not an ancestor of $P$ and
the second is to intervene on an ancestor of $P$ which is not an
ancestor of $Z$. Using this we get
\begin{align}\label{eq:lower-bound-prob}
\prob\cbr{A}=\prob\cbr{\text{$\cL$ intervenes on $Z$ before
any node in $\cA_{\bcH}(P)\triangle\cA_{\bcH}(Z)\setminus\cbr{Z}$}}.
\end{align}
We note that any deterministic learner $\cL$ could be represented
by a sequence of nodes $W_1,\dots,W_n$ with $W_i\neq W_j$ for $i\neq
j$, and for a random graph $\bcH$ the learner intervenes on the node $W_i$ if there is
no element of $\cA_{\bcH}(P)\triangle\cA_{\bcH}(W_i)$ in the sequence
$\bW_{<i}=(W_1,\dots,W_{i-1})$. Using the law of total probability
we write
\begin{align}
\prob\cbr{A}&=\sum_{l=0}^{n-1}
\prob\cbr{A\;\big|\abs{\cA_{\bcH}(P)\triangle\cA_{\bcH}(Z)\setminus\cbr{Z}}=l}
\prob\cbr{\abs{\cA_{\bcH}(P)\triangle\cA_{\bcH}(Z)\setminus\cbr{Z}}=l}.
\end{align}
Moreover, we have
\begin{align}
&\prob\cbr{
A\;\big|\abs{\cA_{\bcH}(P)\triangle\cA_{\bcH}(Z)\setminus\cbr{Z}}=l}= \\
&\quad=\sum_{i=1}^{n-l}
\prob\Bigl\{\pr{
\cA_{\bcH}(P)\triangle\cA_{\bcH}(Z)\setminus\cbr{Z}}\cap\bW_{<i}=\emptyset|\\
&\qquad\qquad\qquad\abs{
\cA_{\bcH}(P)\triangle\cA_{\bcH}(Z)\setminus\cbr{Z}}=l,Z=W_i\Bigr\}
\times\prob\cbr{Z=W_i}\\
&\quad=\sum_{i=1}^{n-l}\frac{\binom{n-i}{l}l!(n-l)!}{(n-1)!}
\cdot\frac1n
=\frac1{l+1},
\end{align}
where to get to the last line we used the fact that $Z$ is selected uniformly
at random, we have to choose $l$ elements from $n-i$ elements to label the set
$\cA_{\bcH}(P)\triangle\cA_{\bcH}(Z)\setminus\cbr{Z}$ while the rest of the
nodes could be labeled randomly, and to get the last equality we used the
hockey-stick identity~\cite{jones1994generalized} similar to the proof of
\cref{thm:exact} in \cref{appendix:exact}. Finally, the result follows from
noticing that the probability of
$\abs{\cA_{\bcH}(P)\triangle\cA_{\bcH}(Z)\setminus\cbr{Z}}=l$ is independent of
the labeling of the nodes and thus is equal to $\frac{\abs{\cbr{X\in\cV :
\abs{\cA_{\bcG}(P)\triangle\cA_{\bcG}(X)\setminus\cbr{X}}=l}}}{n}$.
\end{proof}

\section{Experiments}\label{appendix:experiments}

In this section we discuss additional experimental results aimed at testing our
theoretical findings. Unless stated otherwise we obtain the average regret or
number of interventions to discover the parent node(s) and the standard
deviation over 20 independent runs.

\begin{figure*}
\centering
\begin{subfigure}[T]{0.21\linewidth}
\includegraphics[width=\linewidth]{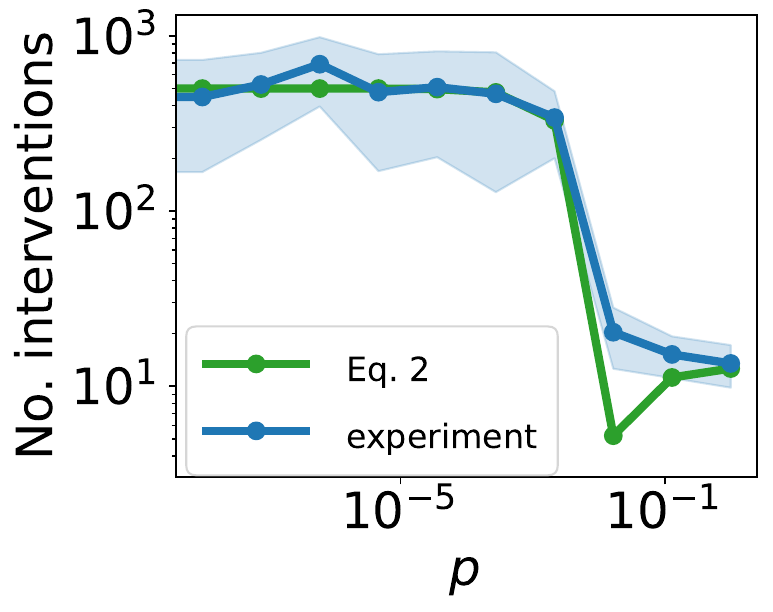}
\caption{}\label{fig:exact}
\end{subfigure}
% \begin{subfigure}[T]{0.1\linewidth}
% \includegraphics[width=\linewidth]{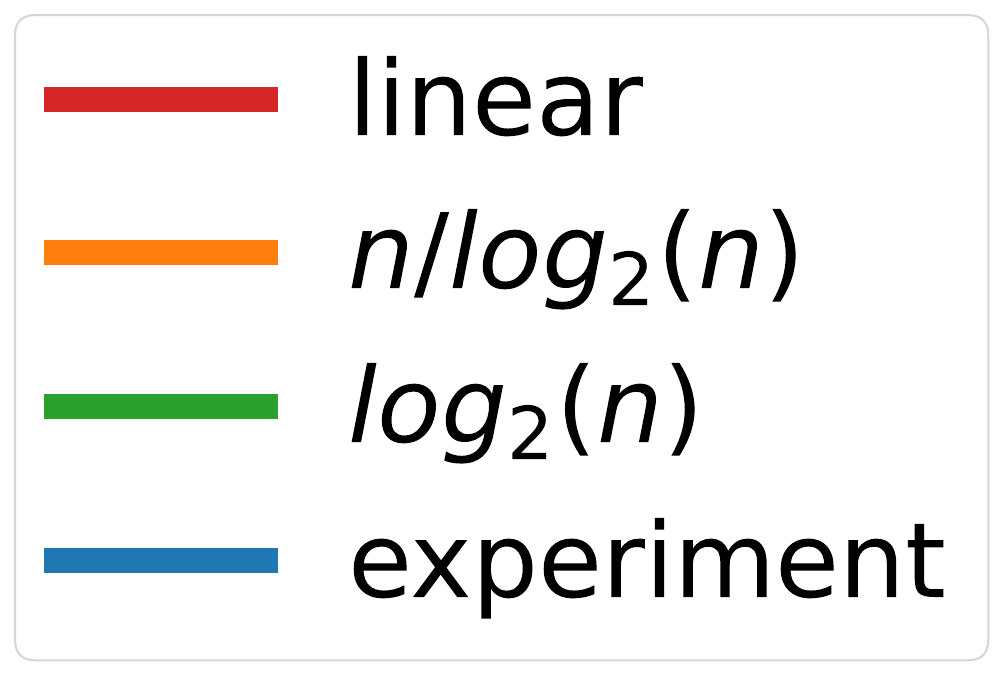}
% \end{subfigure}
\begin{subfigure}[T]{0.21\linewidth}
\includegraphics[width=\linewidth]{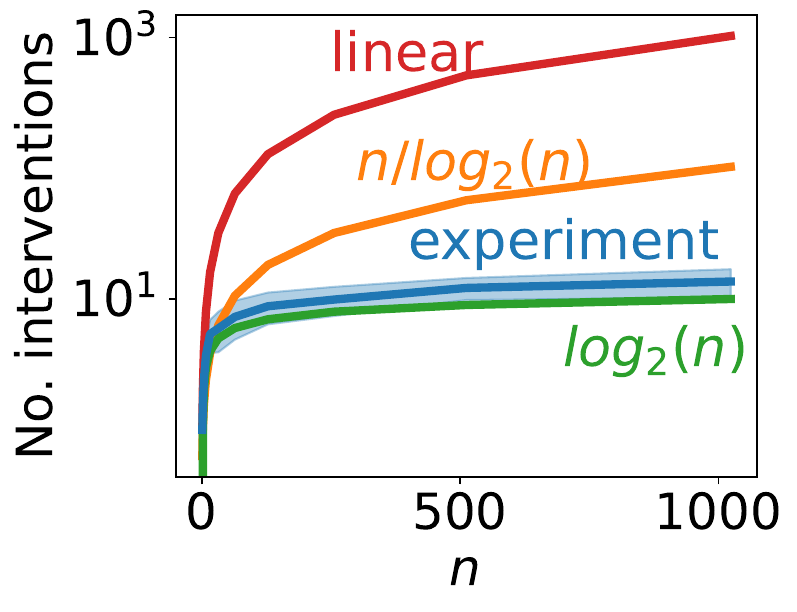}
\caption{}\label{fig:er-fast}
\end{subfigure}
\begin{subfigure}[T]{0.21\linewidth}
\includegraphics[width=\linewidth]{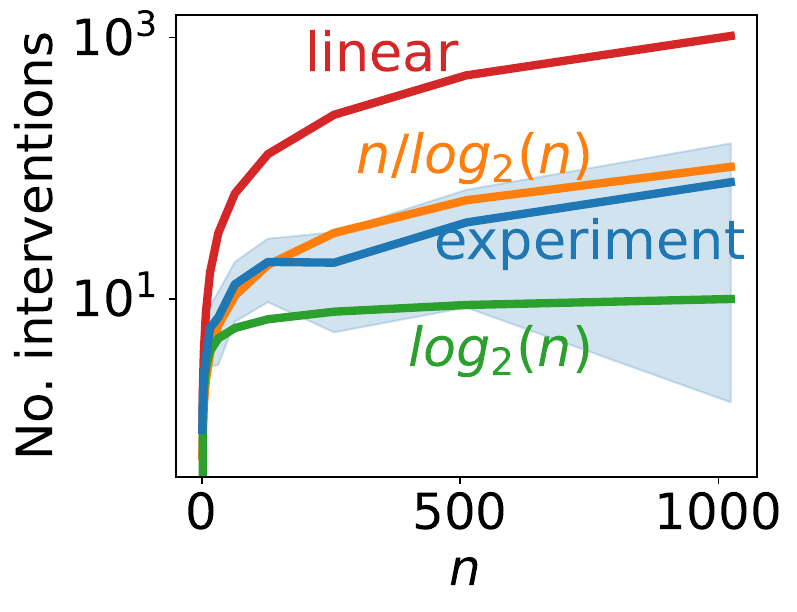}
\caption{}\label{fig:er-slow}
\end{subfigure}
\begin{subfigure}[T]{0.21\linewidth}
\includegraphics[width=\linewidth]{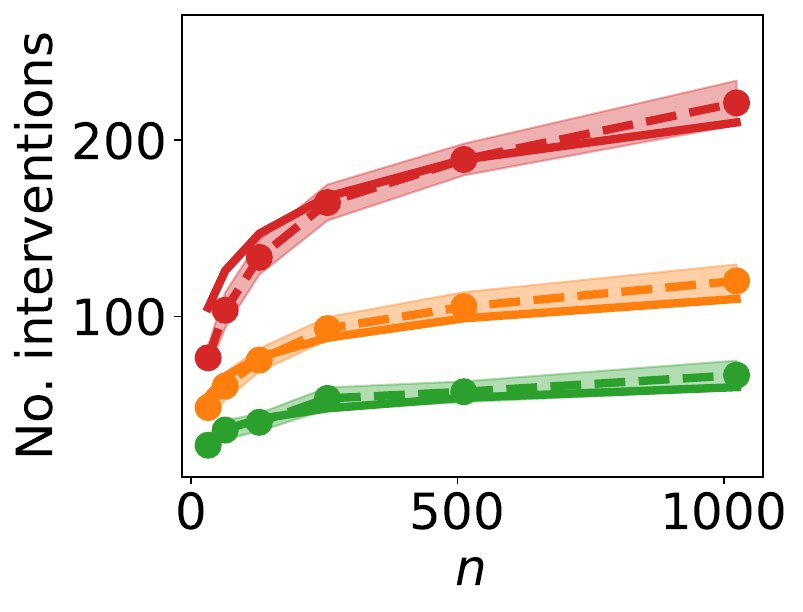}
\caption{}\label{fig:ermultiparent}
\end{subfigure}
\begin{subfigure}[T]{0.1\linewidth}
\includegraphics[width=\linewidth]{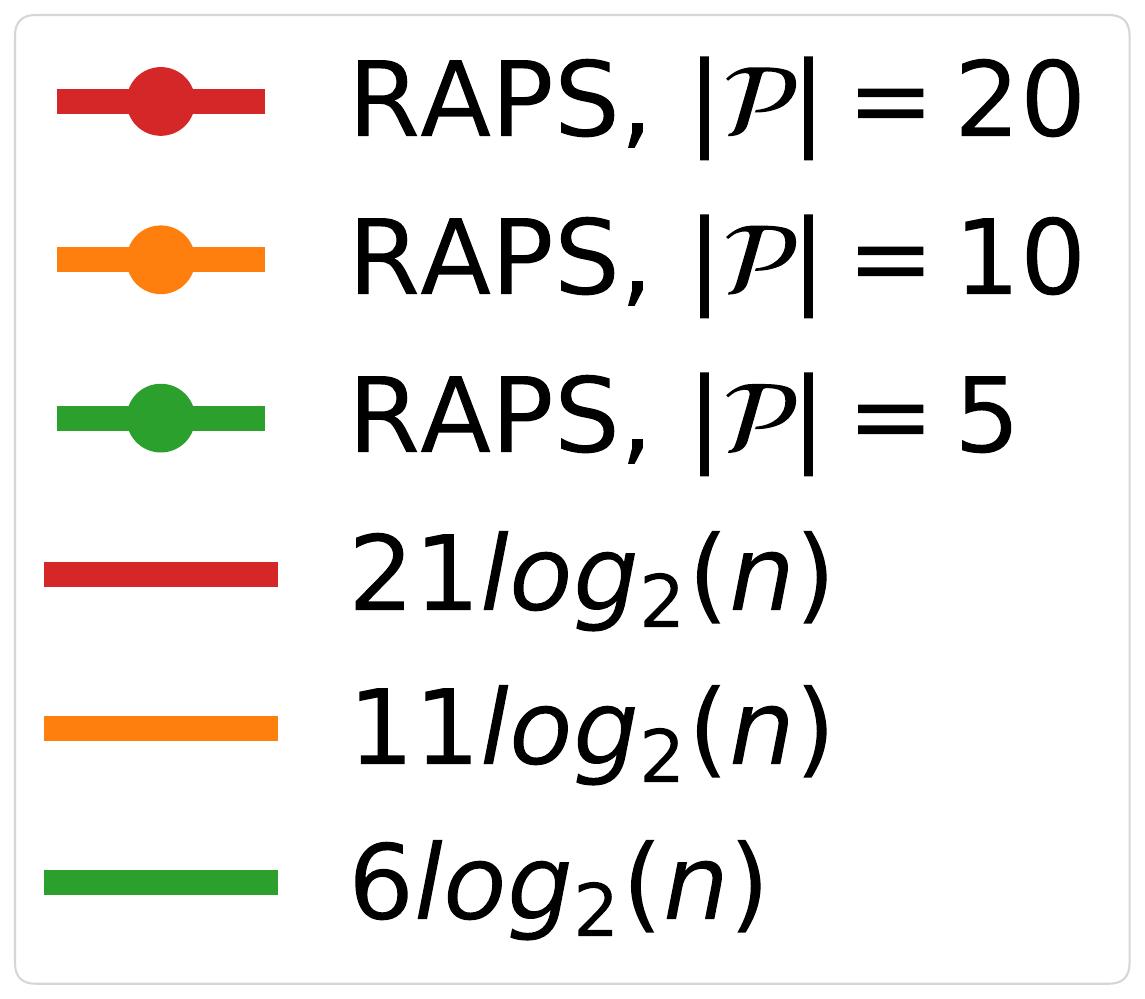}
\end{subfigure}
\caption{(a) Comparison between \cref{eq:exact} from \cref{thm:exact} and
the experimental number of interventions of \cref{alg:thealg} on
Erd\H{o}s-R\'enyi random \acp{dag}. (b-c) The results of running \ac{raps} on
Erd\H{o}s-R\'enyi random \acp{dag} with large and small $p$. (d) Results of
running \cref{alg:multiparent} to discover multiple parent nodes.}
\end{figure*}

% Firstly, we confirm that binary trees require
% $\Theta\pr{\frac{n}{\log(n)}}$ interventions to discover the parent
% node in \cref{fig:tree} where we also plot the two other regimes:
% the $\cO(\log n)$ ``fast`` regime of parent discovery and the trivial
% linear bound on the number of the interventions.

% \begin{figure}[b]
% \centering
% \includegraphics[width=0.75\linewidth]{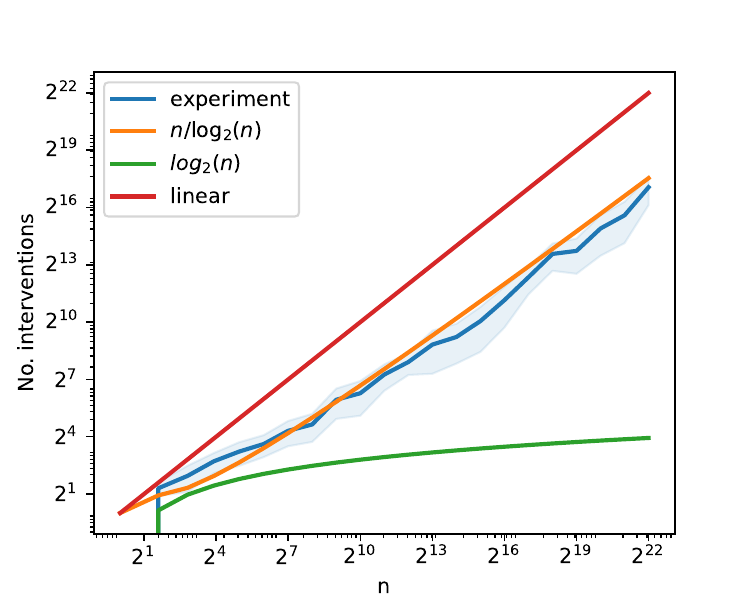}
% \caption{The results of running \ac{raps} on perfect
% binary tree conforming $\Theta\pr{\frac{n}{\log_d(n)}}$
% number of interventions.}\label{fig:tree}
% \end{figure}

\subsection{Algorithmic Aspect}

In this subsection we ignore the statistical aspect of finding the set of
parent nodes and assume that each intervention gives us perfect information
about the descendants of the intervened on node as well as whether that node is
an ancestor of a parent node.

Firstly, we confirm that the \cref{eq:exact} could be used to compute the
expected number of interventions required to discover the parent node in
\cref{fig:exact}. In this experiment we generate Erd\H{o}s-R\'enyi random
\ac{dag} as described in \cref{sec:erdos-renyi} with different values of $p$
and for each such \ac{dag} compute the expected number of interventions as
predicted in \cref{eq:exact}, as well as perform 20 independent runs on the
same graph of \cref{alg:thealg}. The average over those 20 runs and the
standard deviation are shown as the line and the shaded area, respectively,
similar to the other figures. For this experiment we set the number of nodes in
all graphs $n=1000$. Notice also that \cref{eq:exact} matches the lower bound
in \cref{thm:lower-bound}, thus in \cref{fig:exact} we show that the
performance of our algorithm matches the performance of the best possible
algorithm.

Secondly, in \cref{fig:er-fast} we confirm the result of
\cref{corollary:erdos-renyi} by showing that when
$p=1-\pr{\frac{0.5}{\log_2(n)-1}}^{1/(\log_2(n)-1)}$ in Erd\H{o}s-R\'enyi
random \ac{dag} obtained as discussed in \cref{sec:erdos-renyi}, then the
number of interventions required scales as $\cO(\log n)$.  At the same time, in
\cref{fig:er-slow} we show that when $p=\frac{\log{n}}{n}$, then the number of
interventions scales as $\frac{n}{\log{n}}$.

% \begin{figure}
% \centering
% \includegraphics[width=0.75\linewidth]{erdos-renyi-fast.pdf}
% \caption{The results of running \ac{raps} on Erd\H{o}s-R\'enyi
% random \acp{dag} with $p=1-\pr{\frac{0.5}{\log_2(n)-1}}^{1/(\log_2(n)-1)}$.
% The number of interventions is asymptotically logarithmic as predicted
% by \cref{corollary:erdos-renyi}}\label{fig:er-fast}
% \end{figure}

% \begin{figure}
% \centering
% \includegraphics[width=0.75\linewidth]{erdos-renyi-slow.pdf}
% \caption{The results of running \ac{raps} on Erd\H{o}s-R\'enyi
% random \ac{dag} with $p=\frac{\log n}{n}$. We observe that the number
% of interventions is on the order of $\frac{n}{\log n}$.}\label{fig:er-slow}
% \end{figure}

Additionally, we verify the result of \cref{corollary:ermultiparent} in
\cref{fig:ermultiparent}. On this figure we see that in Erd\H{o}s-R\'enyi
graphs with $p$ as in the lower bound of \cref{corollary:ermultiparent} (with
$c_0=0.5,c_1=1$ and $k=1$) the number of interventions required to discover
$\abs\cP$ parents grows as $(\abs\cP+1)\log(n)$.

\subsection{Statistical Aspect}

In \cref{fig:erdos-renyi-regret} we present the regret of running our approach
\acs{raps}+\acs{ucb} and of running just \acs{ucb} on Erd\H{o}s-R\'enyi graphs.
For this figure we set $p=\log\log(10)/\log(10)$ and $K=4$. We sample an
Erd\H{o}s-R\'enyi graph as discussed in \cref{sec:erdos-renyi} with $9$ nodes
and then add the reward node with a uniformly selected parent. The \acs{pcm} is
such that each node takes the value of a randomly selected parent and uniformly
sampled value in the set $[K]$ when there are no parents. The probability of
the reward node taking value equal to 1 is the value of its' parent divided by
the maximum value that it can take, $K$. The value of $\delta$ is set to be
$0.01$. For \acs{ucb} algorithm there is only one line since the regrets
between the different runs are very close. We can see that while the average
regrets of the two approaches are close, \acs{raps}+\acs{ucb} has a much bigger
variance: it can be much faster than \acs{ucb} due to quickly finding the
parent node or it can not find the parent node during the selected horizon
$T=10^7$. In our analysis of the results we found that this is due to the large
budget $B$ that might be required for some Erd\H{o}s-R\'enyi graphs.  Note that
with our approach it is possible to estimate the number of steps it will take
to discover the set of parent nodes and thus one can decide whether or not to
use \acs{raps} before the experiment.  Due to the dependence of budget $B$ on
$\eps$ and $\Delta$, we explore how the values of these variables depend on the
parameters of Erd\H{o}s-R\'enyi graphs and the number of parents.
\begin{figure}
\centering
\includegraphics[width=0.3\linewidth]{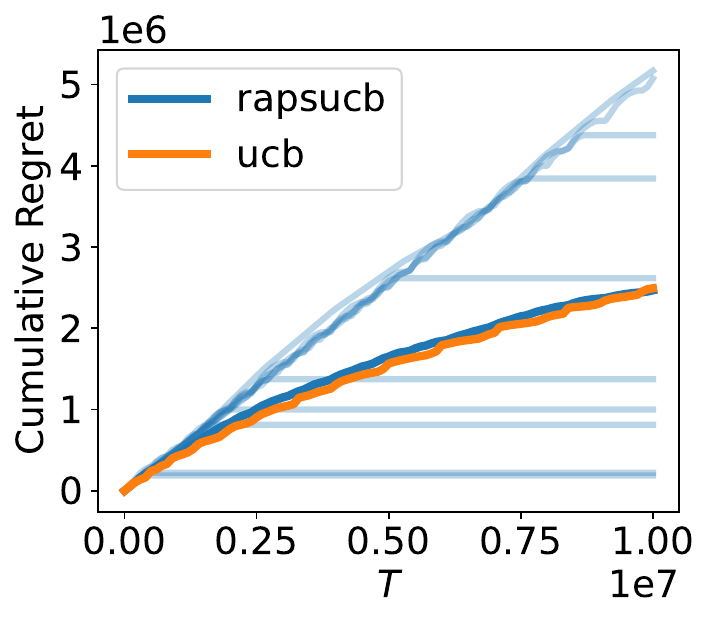}
\caption{Regret on Erd\H{o}s-R\'enyi graph with $p=\log\log(n)/\log(n)$.
Bold opaque lines show the mean over 10 runs, other lines show the regrets
for each of the 10 runs.}\label{fig:erdos-renyi-regret}
\end{figure}

In \cref{fig:epsgap} we present the results of our experiments aimed at
studying the behavior of $\eps$ and $\Delta$.  For the first figure we vary
probability of an edge $p$ and set $n=10$, for the second we vary the number of
nodes $n$ and set probability of an edge to be $p=\log\log(n)/\log(n)$, while
for the last figure we set $p=\log\log\log(n)/\log\log(n)$ and $n=16$. We can
see that for Erd\H{o}s-R\'enyi graphs $\eps$ can take small values for
substantially high probability and its' value decreases with the number of
nodes $n$.
\begin{figure}
\centering
\begin{subfigure}[T]{0.3\linewidth}
\includegraphics[width=\linewidth]{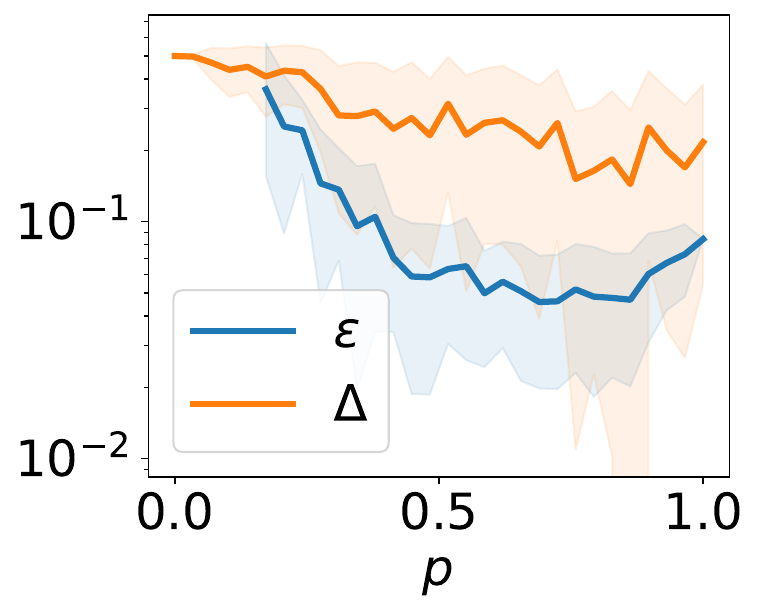}
\end{subfigure}
\begin{subfigure}[T]{0.3\linewidth}
\includegraphics[width=\linewidth]{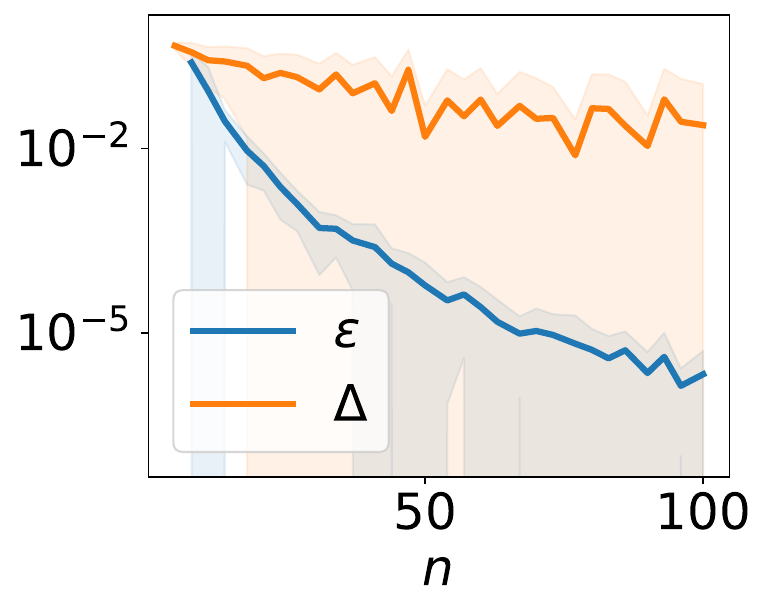}
\end{subfigure}
\begin{subfigure}[T]{0.3\linewidth}
\includegraphics[width=\linewidth]{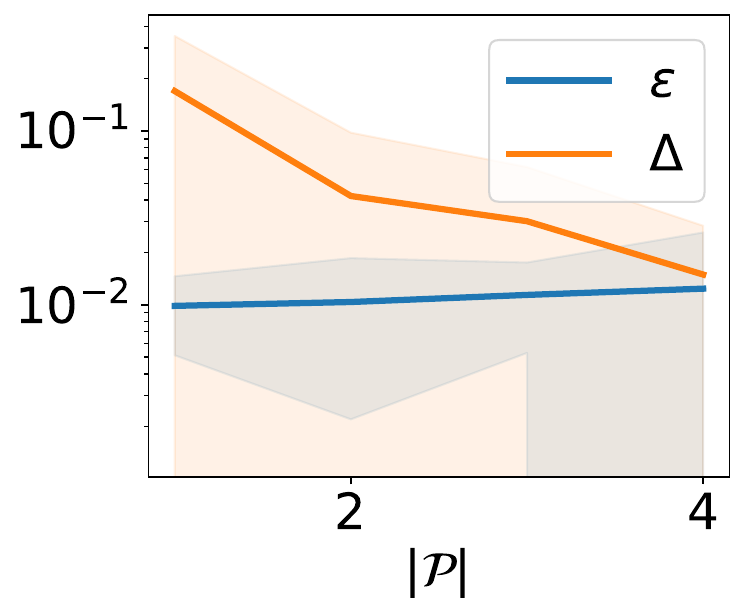}
\end{subfigure}
\caption{Values of $\eps$ and $\Delta$ for different parameters
of Erd\H{o}s-R\'enyi graphs and the number of parents.}\label{fig:epsgap}
\end{figure}

% \begin{figure}
% \centering\includegraphics[width=0.75\linewidth]{multiparent.pdf}
% \caption{Simulation of running \cref{alg:multiparent} on Erd\H{o}s-R\'enyi
% random graph with $p$ equal to the lower bound in
% \cref{corollary:ermultiparent} with
% $c_0=0.5,c_1=1,k=1$.}\label{fig:ermultiparent}
% \end{figure}

\end{document}